\documentclass{article} 
\usepackage{iclr2026_conference,times}


\usepackage{amsmath,amsfonts,bm}









\def\eqref#1{equation~\ref{#1}}









\def\1{\bm{1}}










\DeclareMathAlphabet{\mathsfit}{\encodingdefault}{\sfdefault}{m}{sl}
\SetMathAlphabet{\mathsfit}{bold}{\encodingdefault}{\sfdefault}{bx}{n}













\usepackage[colorlinks=true, linkcolor=blue, citecolor=cyan, urlcolor=cyan]{hyperref}
\usepackage{url}
\usepackage{graphicx}

\usepackage{booktabs}
\usepackage{wrapfig}
\usepackage{amssymb}
\usepackage{amsthm}
\newtheorem{lemma}{Lemma}
\usepackage{algorithm}
\usepackage{algorithmic}
\usepackage[dvipsnames]{xcolor}
\usepackage{framed}
\usepackage{listings}
\usepackage{enumitem}
\usepackage{tcolorbox}
\usepackage{multirow}

\title{On Discovering Algorithms for Adversarial Imitation Learning}

\iclrfinalcopy
\author{Shashank Reddy Chirra$^{\diamond \dagger}$ \\
University of Oxford \\
\texttt{shashank@robots.ox.ac.uk}
\And
Jayden Teoh \\
Singapore Management University \\
\texttt{jxteoh.2023@smu.edu.sg}
\And
Praveen Paruchuri \\
IIIT Hyderabad \\
\texttt{praveen@iiit.ac.in}
\And
Pradeep Varakantham \\
Singapore Management University \\
\texttt{pradeepv@smu.edu.sg}
}

%

\begin{document}

\maketitle
\renewcommand\thefootnote{}
\footnotetext{$^\diamond$ Corresponding author}
\footnotetext{$^\dagger$ Research conducted while at SMU as a Research Engineer}

\begin{abstract}
Adversarial Imitation Learning (AIL) methods, while effective in settings with limited expert demonstrations, are often considered unstable. These approaches typically decompose into two components: Density Ratio (DR) estimation $\frac{\rho_E}{\rho_{\pi}}$, where a discriminator estimates the relative occupancy of state-action pairs under the policy versus the expert; and Reward Assignment (RA), where this ratio is transformed into a reward signal used to train the policy. While significant research has focused on improving density estimation, the role of reward assignment in influencing training dynamics and final policy performance has been largely overlooked. RA functions in AIL are typically derived from divergence minimization objectives, relying heavily on human design and ingenuity. In this work, we take a different approach: we investigate the discovery of data-driven RA functions, i.e, based directly on the performance of the resulting imitation policy. To this end, we leverage an LLM-guided evolutionary framework that efficiently explores the space of RA functions, yielding \emph{Discovered Adversarial Imitation Learning} (DAIL), the first meta-learnt AIL algorithm. Remarkably, DAIL generalises across unseen environments and policy optimization algorithms, outperforming the current state-of-the-art of \emph{human-designed}  baselines. Finally, we analyse why DAIL leads to more stable training, offering novel insights into the role of RA functions in the stability of AIL. 
\end{abstract}

\begin{figure*}[htbp!] 
    \centering
   \includegraphics[width=\textwidth]{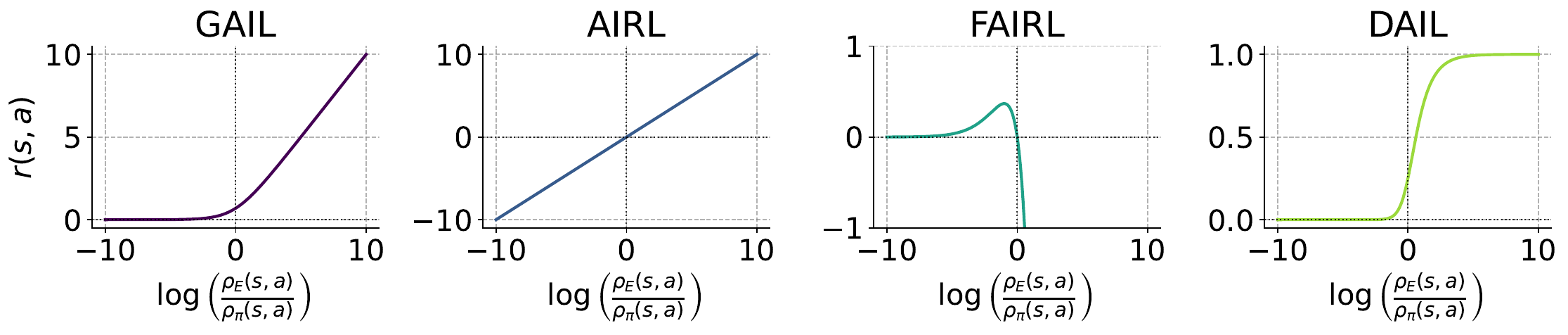}
    \caption{Visualization of the different reward assignment functions.}
    \label{fig:reward_functions}
\end{figure*}

\section{Introduction}

\textit{Reinforcement Learning (RL)} has achieved impressive results across a range of complex domains~\citep{dqn, alphago}, conditioned on the availability of well-defined and informative reward functions. However, in many real-world settings, specifying such reward functions is either prohibitively difficult or entirely infeasible, whereas providing demonstrations of the desired behavior is often easier and cost-effective. This motivates the paradigm of \emph{Imitation Learning} (IL; \citet{argall2009, SCHAAL1999233}), which seeks to learn policies directly from expert demonstrations. IL is particularly well-suited for applications such as autonomous driving \citep{bc} and robotic manipulation \citep{argall2009}, where hand-crafting precise reward functions poses a significant challenge.

A particularly effective approach within imitation learning is \textit{Adversarial Imitation Learning} (AIL; \citet{gail}), which draws inspiration from \textit{Generative Adversarial Networks} (GANs; \citet{gan}) and is recognized for its strong performance when expert demonstrations are limited. Similar to GANs, AIL formulates the learning process as a two-player adversarial game between a generator (i.e., the \emph{policy} network) and a \emph{discriminator} network. The policy aims to generate trajectories that are indistinguishable from those of the expert, while the discriminator learns to distinguish between expert- and policy-generated trajectories. 

Since it shares a similar objective to GANs, AIL inherits some of the training challenges associated with adversarial methods \citep{wgan}---most notably the issues related to instability. In this work, we focus on one critical factor that underpins stable and effective training: \textbf{the quality of the learning signal}. Previous research in GANs \citep{gan, wgan} has shown that providing strong and informative gradient signals to the generator is essential for improving its performance and ensuring convergence of the adversarial game. In the context of AIL, this learning signal manifests as the rewards given to states-action pairs visited by the policy. Although recent non-adversarial approaches (e.g., \citet{valuedice} and \citet{iqlearn}) have shown promise, their empirical performance remains mixed \citep{sfm, diffail2}, and they offer limited flexibility (e.g. for reward shaping \citep{evil}), motivating the need to stabilize adversarial imitation learning through more informative reward signals for policy optimization.

We first introduce AIL and reward assignment through the lens of \emph{divergence minimization} between expert and policy occupancy measures as highlighted in \citet{fairl}. This perspective reveals a natural two-stage decomposition of the reward assignment process: (a) \emph{Density Ratio (DR)} estimation, where the ratio of occupancy measures for each state-action pair is estimated, and (b) \emph{Reward Assignment (RA)}, which maps this ratio to scalar rewards for policy optimization. While prior work has improved stage (a) through better stabilization of the discriminator training \citep{cgail, diffail, diffail2} (b) has attracted considerably less attention \citep{airl, fairl}. Following \citet{fairl}, we highlight how RA functions (Figure \ref{fig:reward_functions}) influence policy-learning dynamics in adversarial training, and propose an LLM-based meta learning framework to \textbf{discover reward assignment functions} for improved performance.  

This optimization produces a RA function that, when integrated into the AIL framework, results in \emph{Discovered Adversarial Imitation Learning} (DAIL) (Figure \ref{fig:reward_functions}). When evaluated on unseen environments from the Brax \citep{brax} and Minatar \citep{minatar} suites, DAIL outperforms state-of-the-art baselines, including GAIL \citep{gail}, AIRL \citep{airl}, FAIRL \citep{fairl} and GAIL-heuristic \citep{whatmattersforail}. To the best of our knowledge, DAIL is the first meta-learned AIL algorithm. We further demonstrate that it also generalizes to policy optimization algorithms not seen during discovery. Finally, by examining DAIL's training dynamics, we show that DAIL enhances performance by producing more informative learning signals.

\section{Related Work}
\paragraph{Learning from Demonstrations} In settings where reward design is challenging or limited in expressivity \citep{argall2009, teoh2025morl}, or where exploration is difficult \citep{il_4_exploration}, learning directly from expert demonstrations offers a compelling alternative. The simplest approach, \emph{Behavior Cloning} (BC; \citet{bc, bc2}), treats imitation as supervised learning but struggles in low-data regimes due to compounding errors outside the training distribution. In contrast, \emph{distribution-matching} methods explicitly align the expert and policy state-action distributions, which helps mitigate distributional shift and improves robustness. Among them, \emph{Adversarial Imitation Learning}~\citep{gail, airl, fairl, whatmattersforail}---inspired by GANs---has shown strong results but suffers from instability. Previous work has focused on stabilizing the discriminator using improved loss functions~\citep{cgail, diffail}, architectures~\citep{diffail2}, and regularizers~\citep{whatmattersforail, wgail}. However, \textit{reward assignment}---how discriminator logits should be mapped to rewards---has received little attention. \citet{airl} first highlighted the impact of RA, showing that performance can vary with the structure of the underlying MDP, and proposed methods such as incorporating absorbing states to mitigate this dependence. \citet{fairl} and \citet{fgail} highlighted the impact of RA functions on policy optimization and along with \citet{fdivergenceil2021} unified various RA functions as instances of divergence minimization within the distribution-matching IL framework, while \citet{whatmattersforail} empirically benchmark these functions across multiple tasks. This work is, to the best of our knowledge, the first that explores the discovery of novel reward assignment functions, as an alternative to relying on the existing human-derived ones. Recent non-adversarial distribution-matching IL methods, such as ValueDICE~\citep{valuedice} and IQ-Learn~\citep{iqlearn}, have demonstrated promise but showed mixed empirical performance, with AIL variants often performing comparably or better across multiple benchmarks~\citep{diffail2, sfm}. Additionally, unlike AIL approaches, they provide limited flexibility in accommodating scenarios such as state-only demonstrations~\citep{bc2,sfm} and reward shaping~\citep{evil}. Another closely related area is \textit{Inverse Reinforcement Learning} \citep{irl}, which seeks to infer the underlying reward function that best explains the expert policy \citep{misspecification, chirra2024preserving}. This contrasts with our narrower focus on directly optimizing imitation policies and designing reward assignment functions that ensure their stable training. On the other hand, works such as \citep{eureka, eureka2, r*} directly tackle the problem of reward design in MDPs by leveraging LLM-based evolution, ultimately outperforming human-designed rewards.

\paragraph{Meta Learning}
Meta-Learning (also known as ‘learning to learn’) aims to automatically discover learning algorithms through end-to-end optimization \citep{schmidhuber_meta, metalearn2} and has seen extensive application in RL \citep{metarl_survey}. Historically, RL training pipelines have been constrained by CPU-bound simulators. The advent of GPU-accelerated environments, however, has enabled speedups of up to $1000\times$, revitalizing interest in Meta-RL. This has facilitated the discovery of novel loss functions \citep{dpo, ta-dpo}, activation functions \citep{act_disc}, and optimizers \citep{open} for Deep RL. Nevertheless, differentiating through the learning process of an RL algorithm remains a challenging problem, motivating the use of black-box optimization methods \citep{openes, lion}. However, these approaches typically suffer from high sample complexity and limited interpretability. Large language models (LLMs) provide a promising alternative as their broad domain knowledge and code-generation capabilities can enable more interpretable and effective learning algorithms \citep{goldiedisc}. Beyond RL, LLM-based black box optimization has demonstrated success in a wide range of domains, including environment generation \citep{omni-epic}, neural architecture search \citep{evop}, combinatorial optimization \citep{reevo}, mathematics \citep{funsearch}, and more \citep{alphaevolve}.

\section{Background}
\label{sec:background}
\paragraph{Preliminaries} We consider a Markov Decision Process (MDP) $\mathcal{M}$ defined by the tuple $(\mathcal{S},\mathcal{A},\mathcal{P},r,\gamma,\mu)$, where $\mathcal{S}$ denotes the set of states, $\mathcal{A}$ is the set of actions, $\mathcal{P}(s'|s,a) \in [0,1]$ is the transition probability, $r(s,a) \in \mathbb{R}$ is the reward function, $\gamma \in [0,1]$ is the discount factor, and $\mu(s) \in \Delta(\mathcal{S})$ is the initial state distribution. A policy $\pi(.|s) \in \Delta(\mathcal{A})$ is a distribution over the set of valid actions for state $s$. A \textit{trajectory} $\tau = \{(s_t, a_t)\}$ denotes the state-action pairs encountered by executing $\pi$ in $\mathcal{M}$. For a given policy $\pi$, the occupancy measure $\rho_{\pi}(s,a)$ is defined as $\rho^\pi(s,a) = (1-\gamma)\mathbb{E}_{\tau\sim\pi}[\sum_{t=0}^\infty \gamma^tP(s_t = s, a_t=a)$]. Intuitively, it can be interpreted as the distribution over state-action pairs that the agent encounters while following policy $\pi$. A one-to-one correspondence exists between \(\pi\) and \(\rho_{\pi}\)~\citep{lpal}, allowing us to use them interchangeably. For functions \( f(s,a) \) dependent only on state-action pairs, we have $\mathbb{E}_{\tau \sim \pi} \left[\sum_{t=0}^{\infty} \gamma^t f(s_t,a_t)\right] = \frac{1}{1 - \gamma} \mathbb{E}_{(s,a) \sim \rho_{\pi}}[f(s,a)].$ We leverage this identity to use the two expectations interchangeably when optimizing over \( f \).

\paragraph{Distribution Matching IL}

The goal of \textit{distribution-matching imitation learning} is to find a policy \(\pi^*\) whose occupancy measure \(\rho_\pi\) closely aligns with that of the expert \(\rho_E\). In this paper, we focus on \(f\)-divergence minimization---a unifying framework encompassing many distribution-matching IL algorithms~\citep{fairl}. Formally,
\begin{align}
\label{eq:f_div_minimization}
    \pi^* = \arg\min_\pi \; D_f(\rho_E \| \rho_\pi),
\end{align}
where the \(f\)-divergence $D_f$ is defined as
\begin{align}
    D_f(\rho_E \| \rho_\pi) = \mathbb{E}_{(s,a) \sim \rho_\pi} \left[ f\left( \frac{\rho_E(s,a)}{\rho_\pi(s,a)} \right) \right],
\end{align}
with \(f: \mathbb{R}_+ \to \mathbb{R}\) convex and satisfying \(f(1) = 0\). The ratio \(\frac{\rho_E}{\rho_\pi}\), known as the \textit{density ratio}, is formally the \textit{Radon–Nikodym derivative}~\citep{measure_theory}, which quantifies the pointwise discrepancy between \(\rho_E\) and \(\rho_\pi\). In practice, we only have access to a finite set of expert demonstrations rather than the true occupancy measure \(\rho_E\).

\paragraph{Adversarial Imitation Learning} 
An adversarial approach to solving Equation~\ref{eq:f_div_minimization} involves the following iterative steps: \textbf{(1) Policy Rollouts}: Generate trajectories by executing the current policy \(\pi\).
\textbf{(2) Density Ratio Estimation}: Multiple approaches have been proposed to estimate the density ratio \(\frac{\rho_E}{\rho_\pi}\)~\citep{diffail, diffail2}. In this work, we adopt the most common approach \citep{whatmattersforail} of training a classifier (discriminator) to distinguish expert from policy-generated \((s,a)\) pairs via binary cross-entropy loss. An optimal discriminator’s logits (pre-softmax) would then correspond to \(\log\left(\frac{\rho_E}{\rho_\pi}\right)\) (refer to Appendix \ref{app:background} for derivation) and \textbf{(3) Reward Assignment and Policy Improvement}: Depending on the \(f\)-divergence being minimized, each \((s,a)\) pair visited by \(\pi\) receives a reward \(r(s,a) = r_f\left(\frac{\rho_E(s,a)}{\rho_\pi(s,a)}\right)\), where \(r_f: \mathbb{R}_+ \to \mathbb{R}\) is defined as the reward assignment function\footnote{While the original convex function \(f\) can be applied directly, most approaches utilize the function derived from its variational representation, as detailed in the Appendix \ref{app:theory}}. Table~\ref{tab:divergence_reward} summarizes common divergences and their corresponding reward assignment functions. The rewards are then used to update the policy, and \textbf{Steps 1-3 are repeated until convergence}.

\begin{table}[htbp]
    \centering 
    \caption{{Reward assignment functions for different \( f \)-divergences}, where \( \ell = \log \frac{\rho_E(s,a)}{\rho_\pi(s,a)} \)}
    \label{tab:divergence_reward} 
    \begin{tabular}{lcc}
        \toprule
        \textbf{Divergence} & \textbf{Algorithm} & \textbf{Reward Assignment Function} \\
        \midrule
        Forward KL & FAIRL \citep{fairl} & $-\ell(s,a) \cdot e^{\ell(s,a)}$ \\
        Backward KL & AIRL \citep{airl} & $\ell(s,a)$ \\
        Jensen-Shannon & GAIL \citep{gail} & $\text{softplus}(\ell(s,a))$ \\
        Unnamed $f$-div  & GAIL-heuristic \citep{whatmattersforail} & $-\text{softplus}(-\ell(s,a))$ \\
        \bottomrule
    \end{tabular}
\end{table}

\section{Problem Definition}
\label{sec:problem}
Adversarial methods are often considered unstable due to their reliance on optimizing min-max objectives, akin to GANs~\citep{gan}. To mitigate this instability, prior work has predominantly focused on \textbf{Step 2} by improving discriminator training~\citep{cgail, diffail}.

In this paper, we focus on \textbf{Step 3}, highlighting that providing an informative learning signal is crucial for effective policy improvement and overall adversarial training. Originally discussed by \citet{fairl}, Figure~\ref{fig:reward_functions} illustrates how the reward assignment function shapes the policy’s learning dynamics. The AIRL RA function encourages the policy to visit state-action pairs where expert visitation exceeds its own and penalizes it equally when it surpasses the expert. In contrast, the GAIL RA function \textit{only} incentivizes matching the expert on underrepresented pairs, while its heuristic variant does the opposite. FAIRL employs a more nuanced approach: it rewards the policy for slightly exceeding expert visitation, but imposes steep penalties when the expert dominates. ~\citet{fairl} argues that this drives the policy to gradually expand and cover the expert distribution from the outside in. Furthermore, although existing RA functions (Table 1) are derived from well-established $f$-divergence theory, they neglect the practical stability challenges that arise during training.

This raises the question: \textit{Can we meta-learn a reward assignment function that results in stable and effective adversarial training?} To evaluate training quality, we measure the divergence between the expert and the policy after training. Specifically, we use the Wasserstein distance~\citep{earthmover1998}, computed between rollouts generated by the policy and expert demonstrations. We chose this metric due to its robustness and sensitivity in measuring distances between occupancy distributions in RL settings~\citep{ot_rl, ot_rl2}.

\subsection{Formal Definition}

We formalize the meta-learning problem of discovering RA functions as:
\begin{align}
\label{eq:main_obj}
    \min_{f} \; \mathcal{W}(\rho_E, \rho_{\pi^*}; f) \quad \text{s.t.} \quad
    \pi^* = \arg\max_{\pi} r_f(\rho_E \| \rho_\pi),
\end{align}
where \(\mathcal{W}\) denotes the Wasserstein distance, and \(\pi^*\) is the optimal policy obtained by iterating over Steps 1–3 with reward assignment function \(r_f\). We remove additional constraints on $r_f$ (such as convexity), to enable the exploration of more expressive RA functions beyond those derivable from classical $f$-divergences. While this approach foregoes theoretical convergence guarantees, the trade-off is justified by the empirical feedback that $r_f$ receives with policy training. As our results show, the discovered reward assignment functions exhibit robust generalization properties.

\begin{figure*}
    \centering
    \includegraphics[width=\textwidth]{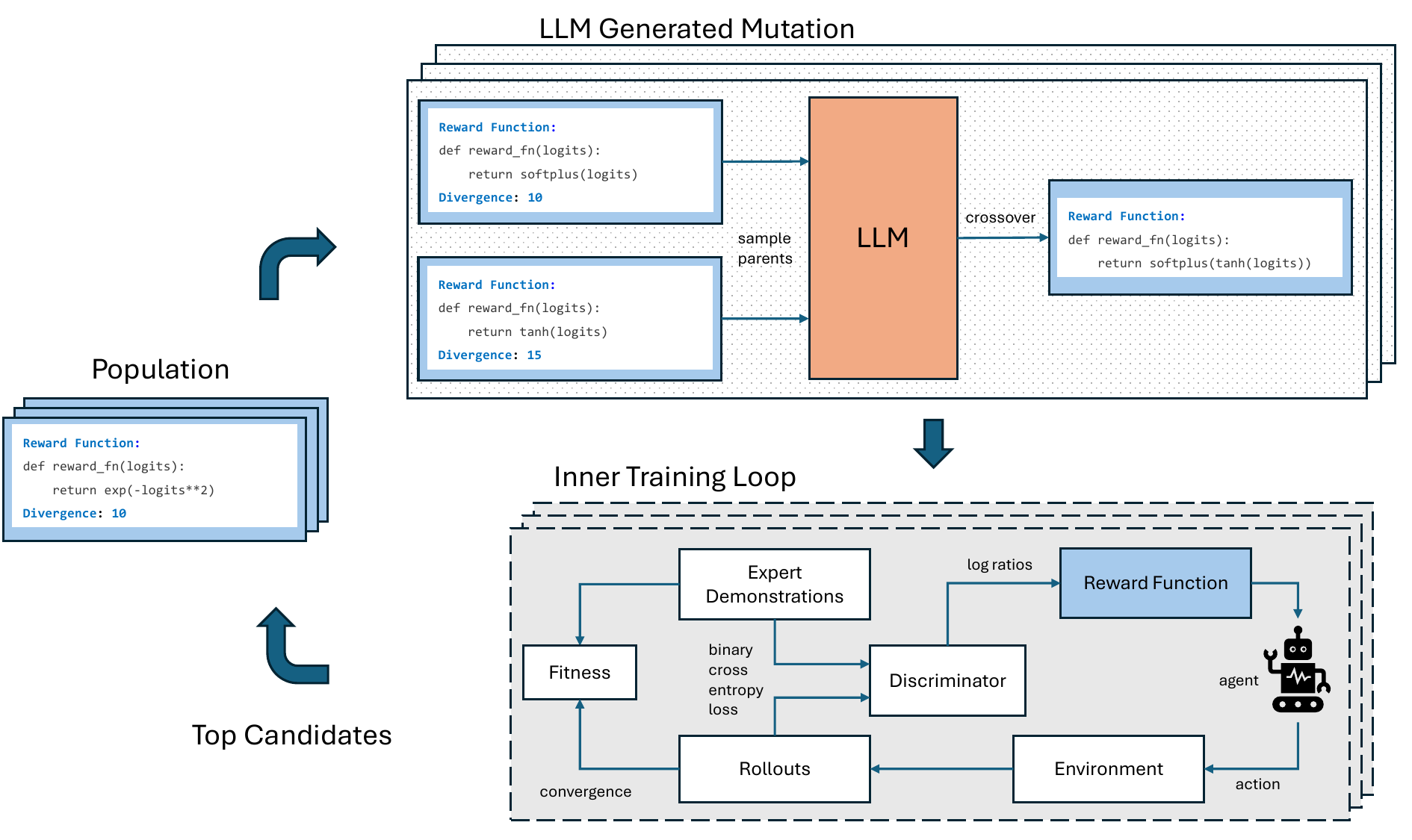}
    \caption{{Visualization} of the LLM-guided evolution. Appendix~\ref{app:pseudo} contains the pseudocode of the framework.}
    \label{fig:llm_evo}
\end{figure*}

\section{Discovering RA functions via Evolutionary Search}
\label{sec:method}
Optimizing Equation~\ref{eq:main_obj} is challenging because it requires backpropagating gradients through the entire adversarial training loop, which is generally computationally intractable. Consequently, prior work has typically relied on black-box methods for such bilevel optimization problems~\citep{open, dpo}. In this work, we adopt an \textbf{LLM-guided evolutionary} framework that has been shown to be sample-efficient, interpretable, and capable of discovering generalizable algorithms in meta-RL~\citep{goldiedisc}.

The RA function \( r_f \) is represented directly as code, enabling expressive and interpretable formulations. To evolve new candidate functions, we prompt the LLM to intelligently combine and mutate parent programs, guided by their structural and behavioral characteristics. LLMs are particularly well-suited for this setting for two key reasons: (1) code (Python) is Turing-complete \citep{omni-epic}, allowing the search space to encompass a rich class of reward assignment functions and (2) the pretrained knowledge encoded in LLMs provides a strong inductive bias, helping to navigate the vast search space more effectively. 

Next, we describe the LLM-guided search algorithm used in our work. Importantly, our main contribution lies not in the evolutionary algorithm itself, but in the formulation and optimization of the meta-learning objective (Eq.~\ref{eq:main_obj}). LLM-guided black-box optimization does not follow a rigid standard---many approaches share a common structure but differ in finer details. For completeness, we outline the variant adopted in this work, which closely resembles EvolveAct \citep{act_disc}, Evo-prompting \citep{evop}, and FunSearch \citep{funsearch}. More sophisticated strategies \citep{promptbreeder,alphaevolve} are complementary to this work.

\subsection{Components of the Evolutionary Search}
\begin{wrapfigure}{r}{0.5\textwidth}
    \centering
    \vspace{-25pt}
    \includegraphics[width=\linewidth]{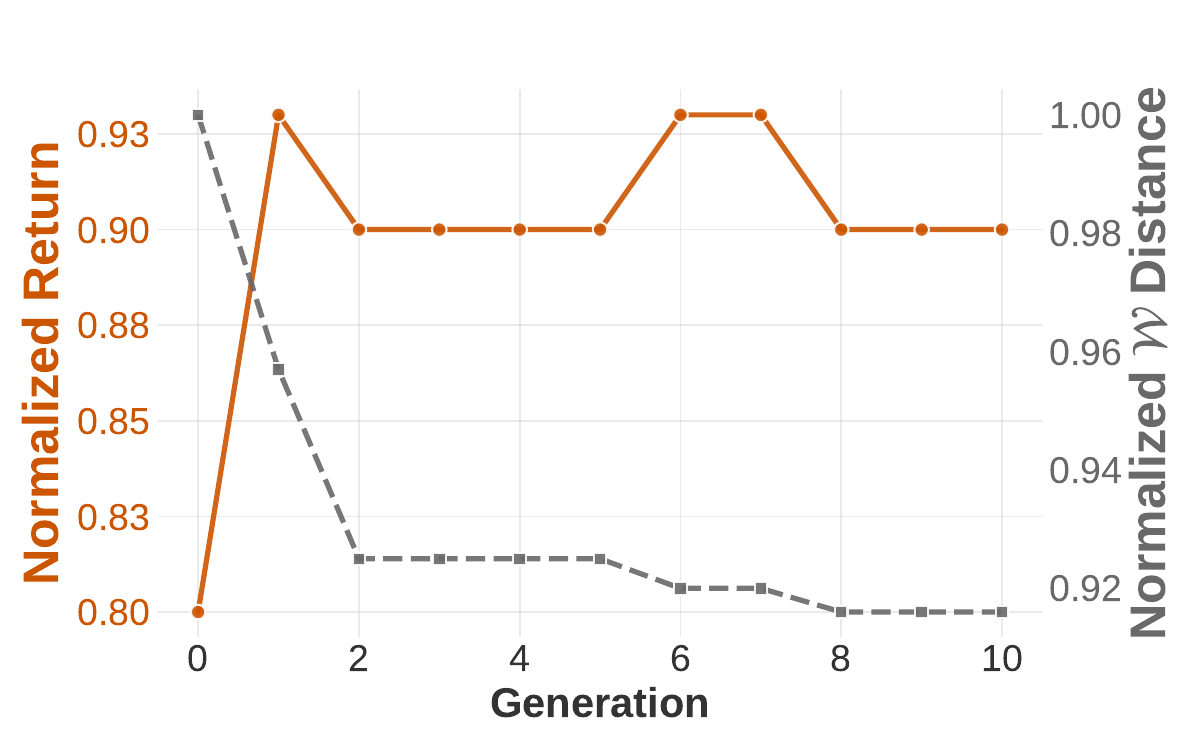}
    \caption{Performance across generations on the \emph{Minatar SpaceInvaders} environment. We report the best-performing member per generation, with Generation $0$ denoting the base population. $\mathcal{W}$ distance is normalized relative to the best base member (GAIL).}
    \label{fig:evo_iter}
    \vspace{-25pt}
\end{wrapfigure}
\paragraph{Base Population.}  
We initialize the search with reward assignment functions from established \(f\)-divergences---GAIL, FAIRL, AIRL, and GAIL-heuristics (Table~\ref{tab:divergence_reward})---to provide a robust foundation for the evolutionary search.

\paragraph{Fitness Evaluation.}  
Each candidate function \( r_f \) is evaluated by training a policy to convergence using \( r_f \) as the reward assignment function, then measuring the Wasserstein distance between its rollouts and the expert’s. This score serves as the fitness criterion for selection.

\paragraph{Crossover.}  
To generate new candidate functions, we sample parent pairs \( \{r_{f_1}, r_{f_2}\} \) from the current population and pass them to the LLM along with their fitness scores. The LLM is then prompted to synthesize a new function \( r_{f_3} \) that combines desirable properties of the parents, with the goal of improving performance (e.g., by blending their functional forms). The detailed prompt format and representative examples of generated candidates are provided in Appendix \ref{app:prompt}.

\subsection{Search Procedure}

The LLM-guided evolutionary framework unfolds over multiple iterations as follows:

\textbf{Initial Population:} Initialize the search with a base population of reward assignment functions derived from known \( f \)-divergence formulations.
    
\textbf{Iterative Evolution:} At each generation: (1) Randomly sample \( M \) pairs of reward assignment functions from the current population. (2) For each pair, use the LLM to generate \( N \) new candidate functions by recombining and refining the parent functions. (3) Evaluate all \( M \times N \) candidates using the distribution-matching fitness score. (4) Select the top \( K \) candidates to populate the next generation.
       
\textbf{Termination:} Repeat until a stopping criterion is achieved (e.g., performance plateau).

\begin{figure}
    \centering
    \includegraphics[width=\linewidth]{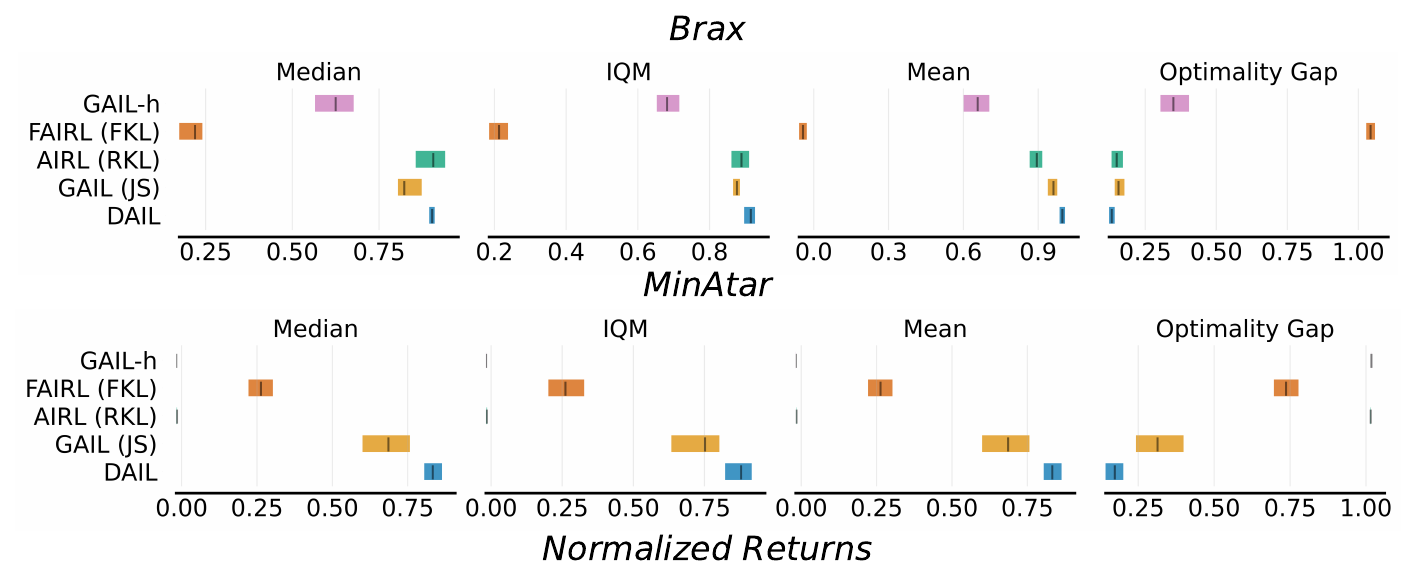}
    \caption{Aggregate performance on the Brax and Minatar suites (excluding SpaceInvaders).}
    \label{fig:main}
\end{figure}

\section{Empirical Studies}

\label{sec:experiments}
We begin by describing the experimental setup, after which we present the results. We conducted our experiments on two benchmark suites: 
MuJoCo control tasks (\textit{Ant, Reacher, Walker2d, HalfCheetah, and Hopper})~\citep{mujoco} 
and Minatar (\textit{Asterix, SpaceInvaders, and Breakout})~\citep{minatar}. For each task, we collect 10 \textit{successful} expert demonstrations from a PPO-trained policy~\citep{ppo} and subsample every 20\textsuperscript{th} transition, following standard practice~\citep{gail}. Unless stated otherwise, we optimize the policy using PPO and regularize the discriminator with a gradient penalty~\citep{gp}. Consistent with prior work in IL, we adopt fixed-length episodes of 1000 timesteps for MuJoCo environments~\citep{imitation}. However, we do not enforce this for the Minatar tasks since doing so significantly degraded performance. Our implementation is fully written in JAX~\citep{jax}, using PureJaxRL~\citep{dpo}, Brax~\citep{brax} for MuJoCo environments, Gymnax~\citep{gymnax} for Minatar, and OTT-JAX~\citep{ottjax} for computing Wasserstein distance. The complete training hyperparameters are provided in Appendix \ref{app:hyperparams}. We normalize returns using min-max scaling between random and expert policy performance, with all results averaged over 16 independent seeds.

\subsection{Evolutionary Search}

We performed the evolutionary search on the \emph{Minatar SpaceInvaders} environment, which has previously been shown to facilitate the discovery of generalizable meta-RL algorithms \citep{ta-dpo}. For the search, we use GPT-4.1-mini, selected for its strong performance–cost tradeoff. Throughout evolution, we fix the PPO and discriminator hyperparameters, evaluating 200 candidate RA functions over a \textcolor{black}{span of three hours}. Full details of the evolutionary hyperparameters are provided in Appendix~\ref{app:hyperparams}.

The evolutionary trajectory is shown in Figure~\ref{fig:evo_iter}. The best-performing reward assignment function discovered at the end of the search is:  
\begin{align}
\label{eq:disc}
    r_{\text{disc}}(x) = 0.5 \cdot \mathrm{sigmoid}(x) \cdot [\tanh(x) + 1].
\end{align}

Building on this, we introduce \emph{Discovered Adversarial Imitation Learning} (DAIL), which applies a standard imitation learning loop with \( r_{\text{disc}} \) as the reward assignment function. Remarkably, DAIL reduces the $\mathcal{W}$-distance to expert trajectories by \textbf{20\%} and improves normalized returns by \textbf{12.5\%} compared to the best baseline (GAIL).

\subsection{Generalization}
We now turn to the out-of-distribution performance of DAIL. As a first step, we benchmark our method against previous RA functions (Table~\ref{tab:divergence_reward}) in Brax and Minatar environments (excluding Minatar SpaceInvaders). All methods share identical hyperparameters, differing only in the choice of RA function. Aggregated scores using the \texttt{rliable} library~\citep{rliable} are shown in Figure~\ref{fig:main}. Among the baselines, AIRL performs slightly better than GAIL on Brax but performs poorly on Minatar, similar to GAIL-heuristic. This is likely because their reward assignment functions yield predominantly negative rewards, incentivizing agents to terminate episodes early. FAIRL performs poorly across both suites, likely due to its exponential, unbounded reward decay for positive log-ratios, which destabilizes training. Overall, the baseline trends align with the observations from \citep{whatmattersforail}.

Figure \ref{fig:main} demonstrates DAIL's effectiveness across both benchmark suites. On Minatar, DAIL significantly outperforms all baselines across all evaluation metrics. On Brax, DAIL outperforms baselines on most metrics, with a slightly lower median than AIRL and statistically significant gains in mean performance. To quantify the robustness of these improvements on Brax, we employ the \emph{probability of improvement} \citep{rliable} metric, which estimates the probability that DAIL outperforms a baseline on a randomly chosen task. As shown in Figure~\ref{fig:p_imp_com} (left), DAIL achieves a probability of improvement greater than $0.5$ against all baselines, providing further evidence of its superior performance. To assess DAIL's generalization beyond the policy optimizer (PPO) used during evolution, we evaluate its performance with A2C \citep{a3c}. As shown in Figure \ref{fig:p_imp_com} (right), DAIL maintains significant performance advantages over GAIL showcasing its effectiveness across different policy optimization algorithms. Finally, we assess DAIL’s generalization performance under various discriminator regularization strategies proposed by \citet{whatmattersforail}, including weight decay, an additional entropy bonus, and spectral normalization of the discriminator weights. We also consider the case without any regularization. As shown in Table~\ref{tab:disc_regs}, DAIL outperforms GAIL in 3 out of 5 regularization regimes.

\begin{figure}
    \centering
    \includegraphics[width=\linewidth]{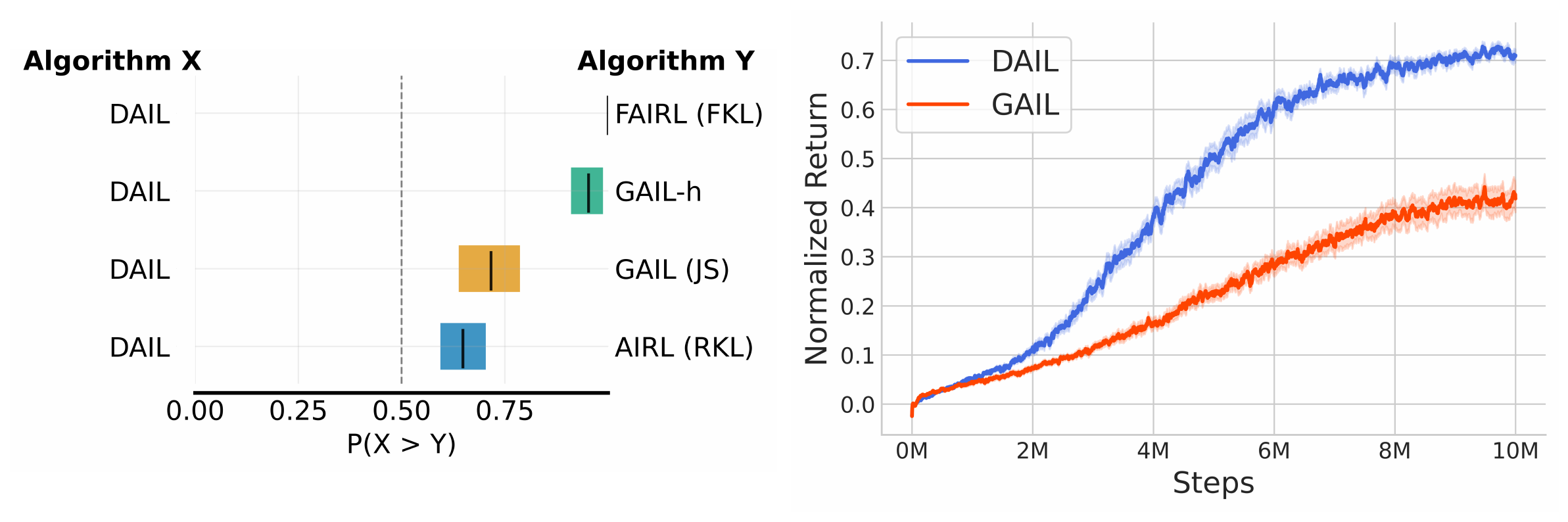}
    \caption{(Left) Probability of improvement of DAIL over baselines on Brax. (Right) Performance comparison between DAIL and GAIL (on Minatar SpaceInvaders) using A2C. We report the mean and standard error (SEM).}
    \label{fig:p_imp_com}
\end{figure}

\begin{table}[h!]
\centering
\small
\colorbox{gray!10}{\begin{tabular}{llccccc}
\toprule
\textbf{Algo} & \textbf{Env} & \textbf{none} & \textbf{w-decay} & \textbf{entropy} & \textbf{spectral} & \textbf{grad-pen} \\
\midrule
\multirow{4}{*}{DAIL} 
 & Asterix       & 0.88 {\tiny $\pm$ 0.03} & 1.33 {\tiny $\pm$ 0.03} & 0.12 {\tiny $\pm$ 0.01} & 0.92 {\tiny $\pm$ 0.03} & 0.66 {\tiny $\pm$ 0.03} \\
 & Breakout      & 0.81 {\tiny $\pm$ 0.07} & 0.74 {\tiny $\pm$ 0.08} & 0.91 {\tiny $\pm$ 0.02} & 0.77 {\tiny $\pm$ 0.07} & 1.01 {\tiny $\pm$ 0.00} \\
 & SpaceInvaders & 0.71 {\tiny $\pm$ 0.07} & 0.81 {\tiny $\pm$ 0.01} & 0.80 {\tiny $\pm$ 0.01} & 0.70 {\tiny $\pm$ 0.09} & 0.90 {\tiny $\pm$ 0.00} \\
  & \textbf{Overall} & 0.80 {\tiny $\pm$ 0.03} & \textbf{0.96 {\tiny $\pm$ 0.03}} & 0.61 {\tiny $\pm$ 0.01} & \textbf{0.80 {\tiny $\pm$ 0.04}} & \textbf{0.85 {\tiny $\pm$ 0.01}} \\

\midrule
\multirow{4}{*}{GAIL} 
 & Asterix       & 1.18 {\tiny $\pm$ 0.03} & 1.44 {\tiny $\pm$ 0.03} & 0.48 {\tiny $\pm$ 0.03} & 0.22 {\tiny $\pm$ 0.03} & 0.52 {\tiny $\pm$ 0.04} \\
 & Breakout     & 0.76 {\tiny $\pm$ 0.07} & 0.52 {\tiny $\pm$ 0.10} & 0.89 {\tiny $\pm$ 0.01} & 0.33 {\tiny $\pm$ 0.10} & 0.85 {\tiny $\pm$ 0.07} \\
 & SpaceInvaders & 0.61 {\tiny $\pm$ 0.09} & 0.34 {\tiny $\pm$ 0.09} & 0.81 {\tiny $\pm$ 0.00} & 0.42 {\tiny $\pm$ 0.08} & 0.81 {\tiny $\pm$ 0.03} \\
  & \textbf{Overall} & \textbf{0.85 {\tiny $\pm$ 0.04}} & 0.76 {\tiny $\pm$ 0.04} & \textbf{0.73 {\tiny $\pm$ 0.01}} & 0.32 {\tiny $\pm$ 0.05} & 0.73 {\tiny $\pm$ 0.03} \\

\bottomrule
\end{tabular}}
\caption{\textcolor{black}{Performance of DAIL and GAIL under different discriminator regularization strategies. Hyperparameters are adopted from \citet{whatmattersforail}. Reported values denote the mean and standard error across runs.}}
\label{tab:disc_regs}
\end{table}

\subsection{Analysis}
\label{sec:analysis}
\begin{figure}
    \centering
    \includegraphics[width=\linewidth]{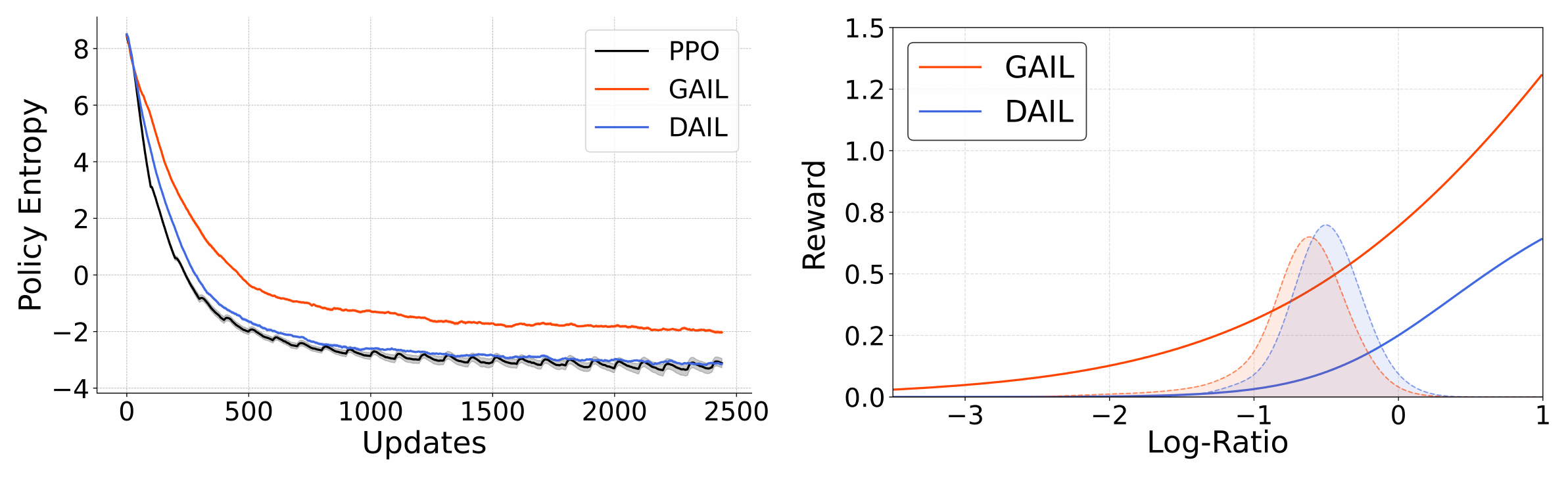}
    \caption{(Left) Policy entropy during training (on HalfCheetah) with different RA functions; PPO with simulator rewards is presented as reference. Results show mean ± SEM. (Right) Distribution of log-density ratios during training on Minatar SpaceInvaders, estimated via kernel density fitting.}
    \label{fig:interp}
\end{figure}

Next, we investigate why DAIL leads to such strong performance. As shown in Figure~\ref{fig:reward_functions}, \( r_{\text{disc}} \) exhibits an S-shape with a sharper gradient and a slight rightward shift compared to a standard sigmoid. Importantly, it is bounded within the interval \([0,1]\), unlike existing baselines. Prior work has shown that bounding rewards can stabilize Deep RL~\citep{dqn, silver_clip}, which we hypothesize contributes to DAIL’s performance.

We assess training stability by tracking policy entropy and comparing it to a PPO agent with access to simulator rewards. As shown in Figure~\ref{fig:interp} (left), policies trained with $r_{\text{disc}}$ converge to lower entropy, closely matching the PPO baseline. This suggests that $r_{\text{disc}}$ delivers a rich and informative signal, enabling sharper action distributions that reflect confident behavior and effective reward maximization. In contrast, GAIL’s RA function produces noisier signals, leading to higher-entropy policies and greater uncertainty in action selection. 

To gain deeper insight, we examine the distribution of log density ratios $\log \tfrac{\rho_E}{\rho_{\pi}}$ of state-action pairs visited during training. We compare the distributions between DAIL and GAIL and analyze the interaction with their respective RA functions. Figure~\ref{fig:interp} (right) shows that a large fraction of the log-ratios lie within the interval \([-1, 0]\) for both methods, with a long tail extending to approximately \(-2\)---a region we identify as indicative of random policy behavior. \( r_{\text{disc}} \) saturates near zero for \( x \lesssim -1.8 \), effectively filtering out noisy or low-quality state-action pairs while maintaining informative gradients for moderately performing ones. In contrast, GAIL’s reward function assigns high positive values even at \( x = -2 \), thereby rewarding state-actions pairs corresponding to near-random policies. We posit that this over-sensitivity to low-quality behavior contributes to the noisier reward signals and instability observed in GAIL’s training dynamics. Note that the findings in Figure~\ref{fig:interp} generalize across all test environments; results for which are omitted due to space constraints.

To further test this hypothesis, we conduct an ablation study on the individual components of $r_{disc}$, comparing it against $\mathrm{sigmoid}(x)$ and $0.5 \cdot [\tanh(x)+1]$. All three functions map to [0,1] and exhibit S-shaped curves, but differ in their response characteristics: $r_{disc}$ closely follows the tail of the density ratio distribution, while the other two functions provide noisier reward signals that remain positive around $x = -1.8$, with $\mathrm{sigmoid}(x)$ being the least responsive due to its relatively flat profile. These differences are validated empirically (Figure \ref{fig:ablation}), where $r_{disc}$ achieves the best performance, followed by $0.5 \cdot [\tanh(x)+1]$, with $\mathrm{sigmoid}(x)$ performing worst.

\begin{figure}
    \centering
    \includegraphics[width=\linewidth]{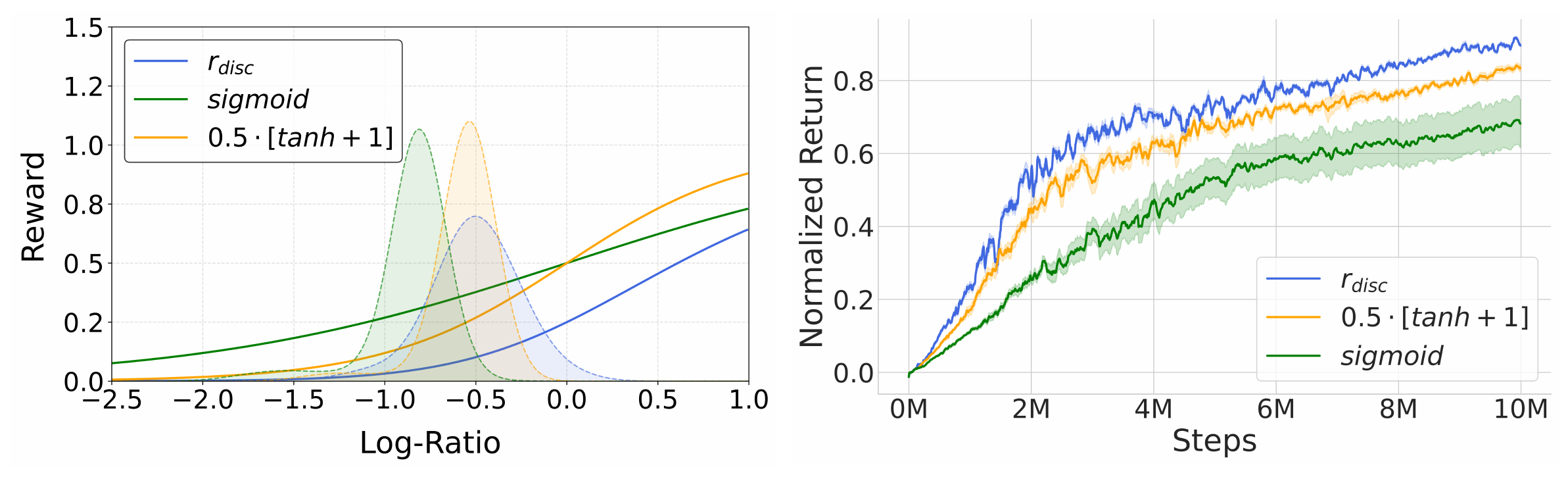}
    \caption{(Left) Log-density ratio distributions during training (Minatar SpaceInvaders), estimated via kernel density estimation. (Right) $r_{disc}$ vs. component function performance (Minatar SpaceInvaders).}
    \label{fig:ablation}
\end{figure}

\subsection{\textcolor{black}{Stability}}

\textcolor{black}{To assess the stability of the evolutionary process, we conduct an additional independent evolutionary run on the \textit{Minatar SpaceInvaders} environment. The top-5 RA functions discovered in both runs are presented in Table~\ref{tab:top5-activations-minatar} and the top-3 are plotted in Figure~\ref{fig:repr_results}. We see that the $6$ evolved RA functions exhibit highly similar structures. Further, they maintain informative gradients within the range $[-1, 0]$ and saturate rapidly thereafter, consistent with our analysis in Section \ref{sec:analysis}. Furthermore, we conduct an evolutionary run in the \textit{MinAtar Breakout} environment and find that DAIL emerges among the top-5 RA functions in the final generation, demonstrating both the stability of the evolutionary process and the effectiveness of DAIL.}

\begin{wrapfigure}{r}{0.5\textwidth}
    \centering
    \includegraphics[width=\linewidth]{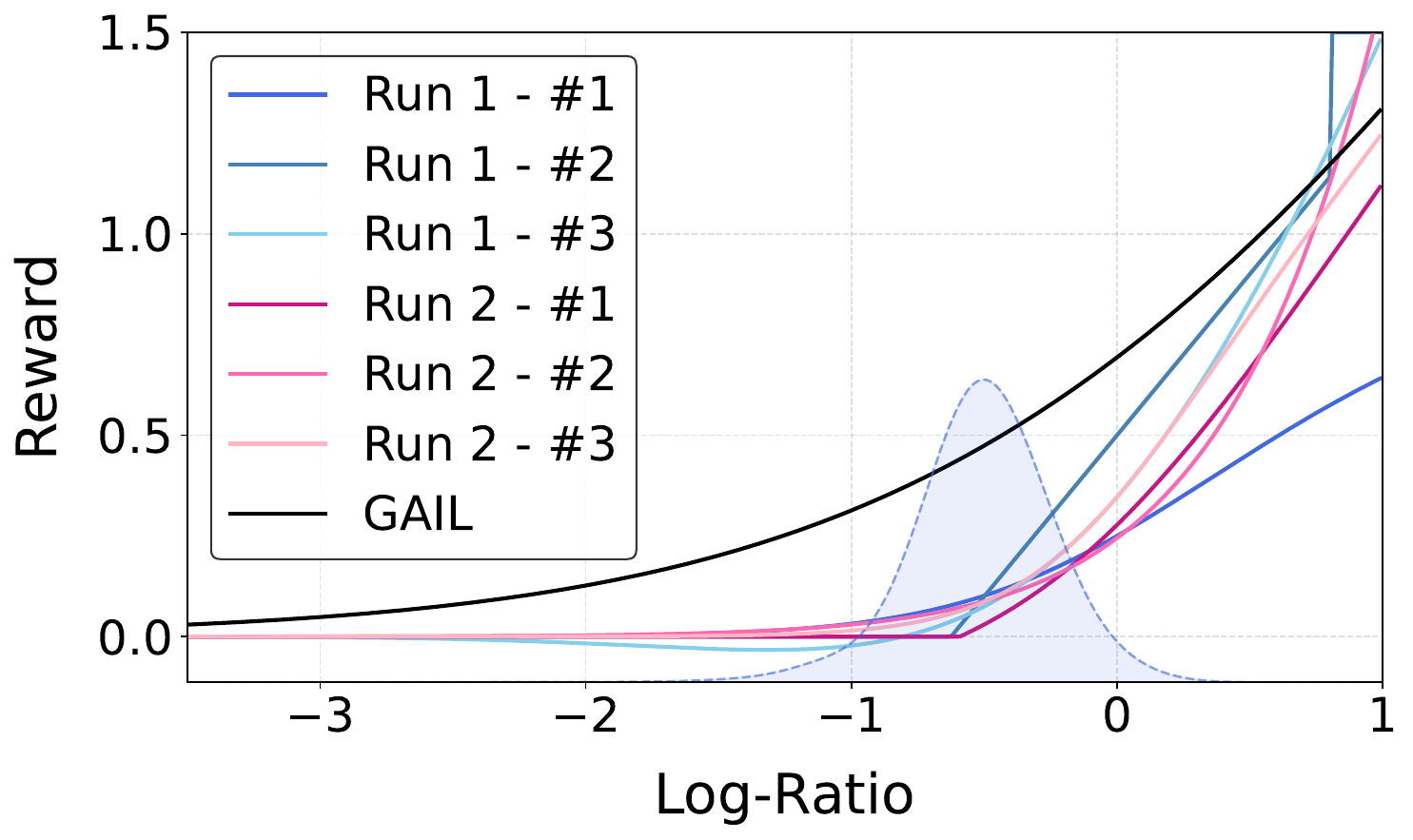}
    \caption{\textcolor{black}{Comparing the top-$3$ discovered RA functions across two independant evolutionary runs. Note that \textit{Run 1 - \#1} corresponds to DAIL.}}
    \label{fig:repr_results}
    \vspace{-15pt}
\end{wrapfigure}

\section{Conclusion}
\paragraph{Summary} 
This work highlights the importance of the RA functions in influencing both policy optimization and the overall stability of AIL---an aspect that has received relatively little attention. We introduce a novel approach using LLM-guided evolutionary search to automatically discover optimal reward assignment functions, resulting in DAIL, the first meta-learned AIL algorithm. Experimental validation demonstrates DAIL's superior and consistent performance across unseen environments and policy optimization algorithms. Through analysis of DAIL's discovered reward function $r_{disc}$ and its impact on training dynamics, we provide novel insights into these performance improvements.

\paragraph{Limitations and Future Work} 
While DAIL demonstrates strong generalization, the discovered RA function \( r_{\text{disc}} \) does not correspond to a valid \( f \)-divergence and therefore lacks theoretical guarantees. Moreover, despite its strong performance, the RA function of DAIL remains \textit{static} throughout training and does not adapt to the training state (e.g., number of updates remaining, loss, observed log-ratios). Exploring \textit{time-aware} RA functions---those that condition on the training state---could yield richer, more informative learning signals, similar to \citet{ta-dpo}. Additionally, including more information into the LLM’s context such as environment information may facilitate more effective crossovers. 

Finally, it would be valuable to evaluate DAIL’s generalization to more complex benchmarks, such as the Atari-57 \citep{ale} and Procgen \citep{procgen} suites. Prior work in meta-learning~\cite{lpo} has demonstrated that training across a diverse set of environments yields more robust algorithms. For instance, \citet{sota-lpo} introduced the Disco57 and Disco103 suites, consisting of 57 and 103 environments, respectively. Investigating the discovery of AIL algorithms meta-trained on such large-scale environment collections is an exciting direction for future research, though it currently lies beyond our computational budget.

\section{Ethics Statement}
As with other meta-learning approaches, the automated discovery of the algorithms (DAIL) obscures its properties and behavior, making analyses like those in Section~\ref{sec:analysis} both challenging and essential for understanding such algorithms. Additionally, as with other IL algorithms, these advances hold promise for safer and more capable AI systems but also introduce risks, including misuse (e.g., imitation of harmful behaviors) and bias (inherited from expert data).


\section{Reproducibility statement}
All hyperparameters and experimental details are reported in Section~\ref{sec:experiments} and Appendix~\ref{app:hyperparams}, with complete information on prompts and the LLM in Appendix~\ref{app:prompt}. The supplementary material provides code (including plotting scripts) and evaluation data to fully reproduce all the main results in this work.

\section{Acknowledgments}
\textbf{This research/project was supported by the Singapore Ministry of Education (MOE) Academic Research Fund (AcRF) Tier 1 grant (Proposal ID: 24-SIS-SMU-010).}

\bibliography{main.bib}
\bibliographystyle{iclr2026_conference}

\clearpage
\appendix
\setcounter{equation}{0}
\section{On $f$-divergence minimization}
\label{app:theory}
We present key preliminary results that will support the derivations in later sections.
\subsection{Background}
\label{app:background}
Note that the results presented here assume a discounted infinite-horizon setting in a discrete MDP for simplicity, but they can be extended to other settings such as finite-horizon problems or continuous state-action spaces.

\textbf{Definition} (Occupancy Measure)  
For a policy \(\pi\), the occupancy measure \(\rho_{\pi}\) is defined as:
\begin{align}
\label{eq:rho}
    \rho^\pi(s,a) = (1-\gamma)\mathbb{E}_{\tau\sim\pi}\left[\sum_{t=0}^\infty \gamma^t\mathbb{P}(s_t = s, a_t=a)\right]
\end{align}

\begin{lemma}[Interchange of Expectations]
\label{lemma:interchange}
For any scalar function \(f: S \times A \to \mathbb{R}\) and discount factor \(\gamma\in[0,1)\),
\begin{align}
    \mathbb{E}_{\tau \sim \pi}\!\biggl[\sum_{t=0}^{\infty} \gamma^t f(s_t,a_t)\biggr]
    \;=\;
    \frac{1}{1 - \gamma}\;\mathbb{E}_{(s,a)\sim\rho_{\pi}}\bigl[f(s,a)\bigr].
\end{align}
\end{lemma}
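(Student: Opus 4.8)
The plan is to establish the identity directly by expanding the discounted sum, interchanging the order of summation, and recognising the occupancy measure in the result. The argument is essentially a Fubini/Tonelli computation combined with the definition in Equation~\ref{eq:rho}.

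First I would express the scalar $f(s_t,a_t)$ as a sum over state-action pairs weighted by indicators, writing $f(s_t,a_t) = \sum_{(s,a)} f(s,a)\,\1[s_t = s,\, a_t = a]$. Substituting this into the left-hand side turns the integrand into a double sum, so that
\begin{align}
\mathbb{E}_{\tau \sim \pi}\!\left[\sum_{t=0}^{\infty} \gamma^t f(s_t,a_t)\right]
= \mathbb{E}_{\tau \sim \pi}\!\left[\sum_{t=0}^{\infty} \gamma^t \sum_{(s,a)} f(s,a)\,\1[s_t = s,\, a_t = a]\right].
\end{align}

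Next I would push the expectation inside and interchange the order of the sum over $t$, the sum over $(s,a)$, and the expectation over trajectories. Using $\mathbb{E}_{\tau\sim\pi}\bigl[\1[s_t=s,a_t=a]\bigr] = \mathbb{P}(s_t=s,a_t=a)$, this yields $\sum_{(s,a)} f(s,a) \sum_{t=0}^{\infty} \gamma^t \mathbb{P}(s_t=s,a_t=a)$. Invoking the definition of the occupancy measure (Equation~\ref{eq:rho}) identifies the inner time-sum as $\rho^\pi(s,a)/(1-\gamma)$, so the expression collapses to $\tfrac{1}{1-\gamma}\sum_{(s,a)} f(s,a)\,\rho^\pi(s,a) = \tfrac{1}{1-\gamma}\,\mathbb{E}_{(s,a)\sim\rho_\pi}[f(s,a)]$, which is exactly the claim.

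The main obstacle, and the only genuinely non-trivial point, is justifying the interchange of the infinite time-sum, the (possibly infinite) sum over the state-action space, and the expectation. For non-negative $f$ this is immediate by Tonelli's theorem, since every term is non-negative. For signed $f$ I would either decompose $f = f^+ - f^-$ and apply the non-negative case to each part, or verify absolute convergence directly: the factor $\gamma^t$ with $\gamma \in [0,1)$ makes the series geometrically summable whenever $f$ is bounded (or, more generally, $\rho_\pi$-integrable), which legitimises the application of Fubini's theorem. No further analytic input is needed, and the continuous state-action case follows by replacing sums with integrals against the appropriate measures.
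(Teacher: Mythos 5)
Your proposal is correct and follows essentially the same route as the paper's proof: expand $f(s_t,a_t)$ over state-action pairs (you via indicators, the paper via the marginal probabilities $\mathbb{P}(s_t=s,a_t=a)$ directly), interchange the sums and the expectation, and identify $\sum_t \gamma^t \mathbb{P}(s_t=s,a_t=a)$ with $\rho_\pi(s,a)/(1-\gamma)$ from Equation~\ref{eq:rho}. Your explicit Tonelli/Fubini justification for the interchange is a point of rigor the paper leaves implicit, but it does not change the argument.
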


\begin{proof}
Starting from the definition of the left-hand side, we have
\begin{align}
    \mathbb{E}_{\tau\sim\pi}\!\Bigl[\sum_{t=0}^{\infty}\gamma^t f(s_t,a_t)\Bigr]
   &= \mathbb{E}_{\tau\sim\pi}\left[ \sum_{t=0}^{\infty}\gamma^t \left(\sum_{s,a} \mathbb{P}(s_t = s, a_t=a)f(s,a)\right)\right] \\
   &= \sum_{s,a} \left[ \left(\mathbb{E}_{\tau\sim\pi} \sum_{t=0}^{\infty}\gamma^t  \mathbb{P}(s_t = s, a_t=a)\right) f(s,a)\right] \\
   &= \sum_{s,a} \frac{\rho_{\pi}(s,a)}{1-\gamma} f(s,a) \quad \text{(using Eq. \ref{eq:rho})}\\
   &= \frac{1}{1 - \gamma}\;\mathbb{E}_{(s,a)\sim\rho_{\pi}}\bigl[f(s,a)\bigr]
\end{align}
\end{proof}

\begin{lemma}[Optimal Discriminator]
Let \(P\) and \(Q\) be two distributions over a random variable \(X\). Consider a discriminator parameterized as \(D(X) = \sigma(\ell(X))\), where \(\sigma\) denotes the sigmoid function and \(D(X)\) represents the predicted probability that \(X\) is drawn from \(P\). If the discriminator is trained via likelihood maximization (i.e., using binary cross-entropy loss), then the optimal discriminator satisfies:
\begin{align}
    \ell^*(X) = \log\left(\frac{P(X)}{Q(X)}\right).
\end{align}
\end{lemma}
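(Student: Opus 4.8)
The plan is to follow the classical optimal-discriminator argument of \citet{gan}, adapted to the logit parameterization $D = \sigma(\ell)$. First I would write down the objective being maximized. Labeling samples from $P$ with target $1$ and samples from $Q$ with target $0$, the binary cross-entropy (log-likelihood) objective that the discriminator maximizes is
\begin{align}
    \mathcal{L}(D) = \mathbb{E}_{X \sim P}\bigl[\log D(X)\bigr] + \mathbb{E}_{X \sim Q}\bigl[\log\bigl(1 - D(X)\bigr)\bigr].
\end{align}
Expanding both expectations against a common dominating measure gives $\mathcal{L}(D) = \int \bigl[P(X)\log D(X) + Q(X)\log(1 - D(X))\bigr]\,dX$ (a sum in the discrete case), which exposes the fact that the objective decouples across the values of $X$.

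The key step is then to maximize pointwise. For each fixed $X$, set $a = P(X)$ and $b = Q(X)$, and treat $y = D(X) \in (0,1)$ as a free scalar; the integrand $a\log y + b\log(1-y)$ is strictly concave in $y$, so its unique maximizer is obtained by setting the derivative $a/y - b/(1-y)$ to zero, yielding
\begin{align}
    D^*(X) = \frac{P(X)}{P(X) + Q(X)}.
\end{align}
Because $D$ ranges over all measurable functions into $(0,1)$, a single choice of $D$ can attain this pointwise optimum simultaneously for every $X$, so $D^*$ is the global maximizer of $\mathcal{L}$.

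Finally I would invert the sigmoid parameterization. Substituting $D^*(X) = \sigma(\ell^*(X)) = (1 + e^{-\ell^*(X)})^{-1}$ into the expression above and solving for $\ell^*$ gives $e^{-\ell^*(X)} = Q(X)/P(X)$, hence $\ell^*(X) = \log\bigl(P(X)/Q(X)\bigr)$, as claimed.

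I expect the only real subtlety to be the justification of the pointwise maximization, i.e.\ the interchange of the supremum over $D$ with the integral over $X$. This is legitimate precisely because the objective is separable across $X$ and $D$ is unconstrained apart from taking values in $(0,1)$, so no coupling between distinct values of $X$ arises and the pointwise optima can be realized jointly. The remaining care is mild: the optimum lies in the open interval $(0,1)$ wherever $P(X)+Q(X)>0$, and the resulting logit $\ell^*$ is finite wherever both densities are strictly positive, which is exactly the regime in which the stated identity is meaningful.
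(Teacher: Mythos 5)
Your proposal is correct and follows essentially the same route as the paper's proof: write the binary cross-entropy objective, maximize it pointwise in $D(X)$ to obtain $D^*(X) = \frac{P(X)}{P(X)+Q(X)}$, then invert the sigmoid to recover $\ell^*(X) = \log\left(\frac{P(X)}{Q(X)}\right)$. Your added justifications (strict concavity of the pointwise objective and the separability argument licensing the interchange of supremum and integral) are details the paper leaves implicit, but they do not change the approach.
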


\begin{proof}
The discriminator is trained using the binary cross-entropy loss:
\begin{align}
\label{eq:bce_loss}
    \mathcal{L}_{\text{BCE}} = \mathbb{E}_{X \sim P}[\log D(X)] + \mathbb{E}_{X \sim Q}[\log (1 - D(X))].
\end{align}
To find the optimal discriminator, we maximize \(\mathcal{L}_{\text{BCE}}\) with respect to \(D(X)\) pointwise. Taking the derivative of the pointwise objective and setting it to zero yields:
\begin{align}
    D^*(X) = \frac{P(X)}{P(X) + Q(X)}.
\end{align}
Substituting \(D^*(X) = \sigma(\ell^*(X))\), we get:
\begin{align}
    \sigma(\ell^*(X)) = \frac{1}{1 + e^{-\ell^*(X)}} = \frac{P(X)}{P(X) + Q(X)}.
\end{align}
Solving for \(\ell^*(X)\) gives:
\begin{align}
    \ell^*(X) = \log\left(\frac{P(X)}{Q(X)}\right).
\end{align}
\end{proof}

\subsection{FAIRL Reward Assignment Function}
The $f$-divergence between two distributions \( P \) and \( Q \) is defined as:
\begin{align}
    D_f(P \| Q) = \mathbb{E}_{X \sim Q} \left[ f\left( \frac{P(X)}{Q(X)} \right) \right],
\end{align}
where \( f : \mathbb{R}_+ \to \mathbb{R} \) is a convex function satisfying \( f(1) = 0 \). The ratio $\frac{P(X)}{Q(X)}$ is referred to as the \textit{density ratio}, and is formally defined as the \textit{Radon-Nikodym derivative} \citep{measure_theory} of $P$ with respect to $Q$.

In the context of $f$-divergence-based imitation learning, the objective is to find a policy \( \pi \) that minimizes the divergence between the expert and policy occupancy measures:
\begin{align}
\label{eq:f_div_app}
    \pi^* &= \arg\min_\pi \; D_f(\rho_E \| \rho_\pi) \\
    &= \arg\min_\pi \; \mathbb{E}_{(s,a) \sim \rho_\pi} \left[ f\left( \frac{\rho_E(s,a)}{\rho_\pi(s,a)} \right) \right] \\
    &= \arg\min_\pi \; (1-\gamma) \cdot \mathbb{E}_{\tau \sim \pi} \left[ f\left( \frac{\rho_E(s,a)}{\rho_\pi(s,a)} \right) \right] \quad \text{(using Lemma ~\ref{lemma:interchange})} \\
    &= (1-\gamma) \cdot \arg\max_\pi \; \mathbb{E}_{\tau \sim \pi} \left[ r_f(s,a) \right],
\end{align}
where the reward function \( r_f \) is defined as:
\begin{align}
   \boxed{ r_f(s,a) = -f\left( \frac{\rho_E(s,a)}{\rho_\pi(s,a)} \right)}
\end{align}

This formulation establishes that minimizing an $f$-divergence is equivalent to maximizing the expected return under a reward assignment function $r_f$ which is a mapping from the \textit{density ratio} $\frac{\rho_E}{\rho_{\pi}}$ to a scalar reward.

In the case of reverse KL divergence, the corresponding \( f \)-function is \( f(x) = x \log x \). Defining the log-density ratio as \( \ell(s,a) = \log \left( \frac{\rho_E(s,a)}{\rho_\pi(s,a)} \right) \), the resulting reward assignment becomes:
\begin{align}
    \boxed{r_{f\text{-RKL}}(s,a) = -e^{\ell(s,a)} \cdot \ell(s,a)}
\end{align}
which corresponds to the reward used in FAIRL~\citep{fairl}.


\subsection{Reward Assignment Functions in GAIL and AIRL}

The $f$-divergence admits a variational representation:
\begin{align}
\label{eq:variational_f}
    D_f(P \| Q) = \sup_{g:\mathcal{X} \rightarrow \mathbb{R}} \mathbb{E}_{X \sim P}[g(X)] - \mathbb{E}_{X \sim Q}[f^*(g(X))],
\end{align}
where \( f^* \) denotes the convex conjugate of \( f \), defined by
\[
    f^*(u) = \sup_{v \in \mathrm{dom}(f)} \left\{ uv - f(v) \right\}.
\]

To understand the structure of the optimal function \( g \), we consider the pointwise optimization of the integrand in Eq.~\eqref{eq:variational_f}. Letting \( u = g(X) \) and \( c = \frac{P(X)}{Q(X)} \), the first-order optimality condition becomes:
\begin{align}
    \nabla_u f^*(u) = c.
\end{align}

Under the assumption that \( f \) is differentiable and strictly convex, the gradient of the convex conjugate satisfies the inverse relationship:
\begin{align}
    \nabla f^*(u) = \left( \nabla f \right)^{-1}(u).
\end{align}

Substituting into the optimality condition, we obtain:
\begin{align}
    (f')^{-1}(u) &= c \\
    \Rightarrow \quad u &= f'(c).
\end{align}

Thus, we have, 
\begin{align}
    \boxed{g^*(X) = f'\left(\frac{P(X)}{Q(X)}\right)}
    \label{eq:var_sol}
\end{align}

In the context of $f$-divergence based imitation learning, we have,
\begin{align}
    \pi^* &= \arg\min_\pi \; D_f(\rho_{\pi} \| \rho_E) \\
          &= \arg\min_\pi \; \sup_g \mathbb{E}_{(s,a) \sim \rho_{\pi}} [g(s,a)] - \mathbb{E}_{(s,a) \sim \rho_{\pi_e}} [f^*(g(s,a))] \\
          &= \arg\min_\pi \; \mathbb{E}_{(s,a) \sim \rho_{\pi}}\left[f'\left(\frac{\rho_{\pi}(s,a)}{\rho_{E}(s,a)}\right)\right] \quad \text{(using Eq. \ref{eq:var_sol})} \\
          &= (1-\gamma) \cdot \arg\max_\pi \; E_{\tau \sim \pi} [r_{f\text{-var}}(s,a)] 
\end{align}
where the reward function $r_{f\text{-var}}$ is defined as:
\begin{align}
\label{eq:var_f_sol}
\boxed{r_{f\text{-var}}(s,a) = -f'\left({\frac{\rho_{E}(s,a)}{\rho_{\pi}(s,a)}^{-1}}\right)} 
\end{align}

This establishes that minimizing the $f$-divergence between $\rho_\pi$ and $\rho_E$ via its variational formulation is equivalent to maximizing the expected return under a reward assignment function defined by \( r_{f\text{-var}} \).

In the case of reverse-KL divergence, the corresponding $f$-function is \( f(x) = x \cdot \log x \), and thus \( f'(x) = 1 + \log x \). Plugging this into the reward assignment gives:
\begin{align}
r_{f\text{-var-RKL}}(s,a) &= -\left(1 + \log \left( \left(\frac{\rho_E(s,a)}{\rho_\pi(s,a)}\right)^{-1} \right) \right) \\
&= \log \left( \frac{\rho_E(s,a)}{\rho_\pi(s,a)} \right) - 1
\end{align}
Ignoring the additive constant, the reward assignment under the reverse-KL divergence simplifies to:
\begin{align}
    \boxed{r_{f\text{-var-RKL}}(s,a) = \ell(s,a)}
\end{align}
which corresponds to the reward used in AIRL \citep{airl}.

In the case of Jensen-Shannon divergence, the corresponding $f$-function is $f(x) = -(x+1)\log{(\frac{x+1}{2})} + x\log x$, and its derivative $f'(x) = \log \left(\frac{2x}{x+1}\right)$. Hence the variational reward assignment becomes:
\begin{align}
r_{f\text{-var-JS}}(s,a)
&= -\log\left(\frac{2\,(\rho_\pi(s,a)/\rho_E(s,a))}{1 + \rho_\pi(s,a)/\rho_E(s,a)}\right)\\
&= \log\frac{1}{2}\left(1+\frac{\rho_E(s,a)}{\rho_{\pi}(s,a)}\right)
\end{align}
Ignoring the additive constant, the reward assignment under the JS divergence simplifies to:
\begin{align}
    \boxed{r_{f\text{-var-JS}}(s,a) = \log(1+e^{\ell(s,a)})}
\end{align}
which corresponds to the reward used in GAIL \citep{gail}.

\textcolor{black}{By plugging the $f$-functions of other commonly used $f$-divergences into Eq.~\ref{eq:var_f_sol}, we can derive their corresponding reward assignment functions, as summarized in Table~\ref{tab:fdiv_reward_table}. While no formal adversarial imitation learning (AIL) methods explicitly employ these divergences, they have been explored in the context of non-adversarial imitation learning algorithms such as IQ-Learn~\citep{iqlearn}.}

\begin{table}[htbp]
    \centering 
    \caption{{Reward assignment functions derived from different \( f \)-divergences}} 
    \label{tab:fdiv_reward_table}
    \colorbox{gray!10}{\begin{tabular}{lcc}
        \toprule
        \textbf{Divergence} & \textbf{\( f(x) \)} & \textbf{\( r_{f\text{-var}} \)} \\
        \midrule
        Forward KL & \( -\log x \)  & \( \frac{\rho_E}{\rho_\pi} \) \\
        \addlinespace
        Reverse KL & \( x \log x \) & \(\log \frac{\rho_E}{\rho_\pi} - 1\) \\
        \addlinespace
        Jensen-Shannon & \( x \log x - (x+1) \log \left( \frac{x+1}{2} \right) \) & \( \log \frac{1}{2}\left( 1 + \frac{\rho_E}{\rho_\pi} \right) \) \\
        \addlinespace
        Squared Hellinger & \( (\sqrt{x} - 1)^2 \) &  \( \sqrt{\frac{\rho_E}{\rho_\pi}} - 1 \) \\
        \addlinespace
        Pearson \(\chi^2\) & \( (x - 1)^2 \) & \( 2\left(1 - \frac{\rho_\pi}{\rho_E} \right) \) \\
        \addlinespace
        Total Variation & \( \frac{1}{2} |x - 1| \) & \( \frac{1}{2} \cdot \operatorname{sign} \left( 1 - \frac{\rho_\pi}{\rho_E} \right) \) \\
        \bottomrule
    \end{tabular}}
\end{table}

\subsection{Optimization}

In the adversarial imitation learning (AIL) framework, optimizing the $f$-divergence objective in Eq.~\ref{eq:f_div_app} corresponds to the following iterative procedure that alternates between training a discriminator and updating the policy.

Assume a discriminator of the form \( D(s,a) = \sigma(\ell(s,a)) \), where \( \ell(s,a) \) is a learned logit function and \( \sigma(\cdot) \) denotes the sigmoid function. The goal of the discriminator is to distinguish between state-action pairs sampled from the expert occupancy measure \( \rho_E \) and those induced by the current policy \( \pi \), denoted \( \rho \).

\paragraph{Step 1: Discriminator Update.} The discriminator is trained by maximizing the binary cross-entropy objective:

\begin{align}
\label{eq:ce_loss}
    D^*(s,a) = \arg\max_D \; \mathbb{E}_{(s,a)\sim\rho_E} \left[ \log D(s,a) \right] + \mathbb{E}_{(s,a)\sim\rho} \left[ \log (1 - D(s,a)) \right].
\end{align}

\paragraph{Step 2: Policy Update.} The policy is then updated to maximize the expected return, where the reward is derived from the discriminator output via a function \( r_{f/f\text{-var}}(s,a) \), which typically corresponds to a variational lower bound on the chosen $f$-divergence:

\begin{align}
    \pi^* = \arg\max_{\pi} \; \mathbb{E}_{(s,a)\sim\pi} \left[ r_{f/f\text{-var}}(s,a) \right].
\end{align}
where the choice of the reward assignment function $r_{f/f\text{-var}}$ depends on the $f$-divergence used. 

Iterate between Steps 1 and 2 until convergence.

\paragraph{Convergence.} As highlighted in \citep{fairl}, under the assumption that the discriminator is optimized to its optimum \( D^* \) at each iteration, the overall procedure converges to a fixed point where the occupancy measure of the learned policy matches that of the expert, i.e., \( \rho_\pi = \rho_E \).

\section{Pseudo-Code}
\label{app:pseudo}
The complete pseudo-code for \textbf{f-AIL} is provided in Algorithm~\ref{alg:fail}, while the LLM-guided evolutionary search is detailed in Algorithm~\ref{alg:llm-evo-fail}.

\begin{algorithm}[t]
\caption{\textbf{f-AIL}: Adversarial Imitation Learning via $f$-Divergence Minimization}
\label{alg:fail}
\begin{algorithmic}[1]
\REQUIRE Expert trajectories \( \tau_E \sim \pi_E \); initial policy, discriminator parameters \( \theta_0 \), \( \phi_0 \); $\lambda$ entropy coefficient, number of updates $T$
\FOR{iteration \( i = 0, 1, \ldots, T \)}
    \STATE Sample trajectories \( \tau_i \sim \pi_{\theta_i} \)
    \STATE Update discriminator parameters \( \phi_i \rightarrow \phi_{i+1} \) using the gradient:
    \[
        \nabla_\phi \; \mathbb{E}_{(s,a)\sim \tau_E} \left[ \log D_\phi(s,a) \right] + \mathbb{E}_{(s,a)\sim \tau_i} \left[ \log(1 - D_\phi(s,a)) \right]
    \]
    \STATE Construct rewards using the discriminator logit: \( r(s,a) \gets r_f(\ell_{\phi_{i+1}}(s,a)) \)
    \STATE Compute advantage estimates \( A(s,a) \) from trajectories \( \tau_i \) using \( r(s,a) \)
    \STATE Update policy parameters \( \theta_i \rightarrow \theta_{i+1} \) via PPO by optimizing
    \[
    \nabla_\theta \mathbb{E}_{(s,a)\sim\tau_i} \left[ \min \big( r_t(\theta) A(s,a), \mathrm{clip}(r_t(\theta), 1-\epsilon, 1+\epsilon) A(s,a) \big) \right] - \lambda \nabla_\theta \mathcal{H}(\pi_\theta)
    \]
    $\qquad$ where \( r_t(\theta) = \frac{\pi_\theta(a|s)}{\pi_{\theta_i}(a|s)} \) is the likelihood ratio, and \( \mathcal{H} \) is the causal entropy.
\ENDFOR
\STATE Sample trajectories \( \tau\sim \pi_{\theta_T} \)
\STATE Calculate Wasserstein distance between occupancy measures:
    \[
        D_{\text{wasserstein}} = \mathcal{W}\big( \{(s,a)_{\pi_{\theta_T}}\}, \{(s,a)_{\pi_E}\} \big) \; \text{where $(s,a)_{\pi_T} \sim \tau $ and $(s,a)_{\pi_E} \sim \tau_{E}$} 
    \]
\RETURN $\pi_{\theta}, D_{\text{wasserstein}}$
\end{algorithmic}
\end{algorithm}

\begin{algorithm}[t]
\caption{LLM-Guided Evolutionary Search}
\label{alg:llm-evo-fail}
\begin{algorithmic}[1]
\REQUIRE Initial population of reward functions \( \mathcal{P}_0 = \{r_f^{(j)}\}_{j=1}^P \), number of generations \( G \), number of pairs \( M \), candidates per pair \( N \), selection size \( K \)

\FOR{generation \( g = 1, \ldots, G \)}
    \STATE Randomly sample \( M \) pairs \( \{ (r_f^{(p_1)}, r_f^{(p_2)}) \} \) from current population \( \mathcal{P}_{g-1} \)
    \STATE \textcolor{BurntOrange}{\textit{\# Crossover Generation}}
    \FOR{each pair \( m = 1, \ldots, M \)}
        \STATE Use LLM to generate \( N \) candidate reward functions \(\{ r_f^{(m,n)} \}_{n=1}^N \) 
    \ENDFOR
    \STATE \textcolor{blue}{\textit{\# Fitness Evaluation}}
    \FOR{each candidate \( (m,n) \)}
        
        \STATE Run Algorithm \ref{alg:fail} with reward assignment function \( r_f^{(m,n)} \)  
        \STATE Obtain policy \( \pi_{\theta_T}^{(m,n)} \) and Wasserstein distance \( D_{\text{wasserstein}}^{(m,n)} \)
    \ENDFOR

    \STATE \textcolor{PineGreen}{\textit{\# Selection}}
    \STATE Select top \( K \) candidates \( \{ r_f^{*} \} \) with least divergence from the union of the current population $\mathcal{P}_{g-1}$ and generated crossovers \(\{ r_f^{(m,n)} \} \):
    \[
    \mathcal{P}_g \leftarrow \{ r_f^{*} \}_{k=1}^K
    \]
\ENDFOR

\RETURN Best reward function(s) \( r_f^{*} \) and corresponding learned policies \( \pi_{\theta_T}^{*} \)
\end{algorithmic}
\end{algorithm}

\clearpage
\section{Prompt Strategy}
\label{app:prompt}
\subsection{Template}
We use the following prompt to instruct the LLM to generate crossover RA functions:
\begin{tcolorbox}[colback=white,colframe=black!70,sharp corners,title=Prompt for Crossover Generation]
\scriptsize

\textbf{Role: AI Research Assistant (Imitation Learning)}

\textbf{Overall Objective:}  
Collaborate to discover novel reward functions for Adversarial Imitation Learning (AIL) that improve \textbf{training stability} and \textbf{final policy performance}. Performance is measured by a performance score (higher is better).

\textbf{Background: Adversarial Imitation Learning Setting}

You have a policy \(\pi\) and expert transitions \((s,a)\) stored in a dataset \(D_E\).  
The typical learning loop involves:  
\begin{enumerate}[nosep]
  \item Sampling transitions \((s,a)\) into a dataset \(D_{\pi}\) using the current policy \(\pi\).
  \item Training a discriminator \(D(s,a)\) to distinguish between expert transitions \((D_E)\) and policy transitions \((D_{\pi})\) using a standard binary cross-entropy loss:
  \[
  L = -\mathbb{E}_{(s,a) \sim D_E} [\log(D(s,a))] - \mathbb{E}_{(s,a) \sim D_{\pi}} [\log(1 - D(s,a))]
  \]
  \item The discriminator's output logits, \(l(s,a)\), approximate the log-density ratio:
  \[
  l(s,a) \approx \log \frac{\rho^E(s,a)}{\rho^{\pi}(s,a)}.
  \]
  \item Policy transitions \((s,a)\) in \(D_{\pi}\) are assigned rewards based on these logits using a reward function \(r(s,a) = f(l(s,a))\). Examples include:
  \begin{itemize}[nosep,leftmargin=2em]
    \item \textbf{GAIL:} \(r(s,a) = -\log(1 - D(s,a)) = \mathrm{softplus}(l(s,a))\) (Smooth rectifier: near 0 for negative logits, linear for positive).
    \item \textbf{AIRL:} \(r(s,a) = \log D(s,a) - \log(1-D(s,a)) = l(s,a)\) (Linear everywhere).
    \item \textbf{FAIRL:} \(r(s,a) = -l(s,a) \cdot \exp(l(s,a))\) (Rises from 0 to \(1/e\) at \(l=-1\), then drops sharply).
    \item \textbf{LOGD:} \(r(s,a) = \log D(s,a) = -\mathrm{softplus}(-l(s,a))\) (Linear for negative logits, near 0 for positive).
  \end{itemize}
  \item The policy \(\pi\) is updated using reinforcement learning (e.g., PPO, SAC) with these calculated rewards.
  \item Steps 1–5 are repeated.
\end{enumerate}

\textbf{Your Task in This Interaction:}

You will be presented with two reward functions, \(f_1\) and \(f_2\) (defined based on logits \(l\)), along with their observed performance. Your goal is to propose a *new* function (not the same as GAIL, AIRL, FAIRL, LOGD), \(f_3\), that aims to perform better (higher score).

\textbf{Instructions:}

\begin{enumerate}[nosep]
  \item \textbf{Analyze \(f_1\) and \(f_2\):}  
  \begin{itemize}[nosep,leftmargin=2em]
    \item Consider their mathematical shapes and properties (e.g., monotonicity, bounds, smoothness).
    \item Consider their behavior when the logits are near zero, positive, and negative. What signal do they provide?
    \item Relate these properties to the provided performance data. Why might one function have performed better/worse?
  \end{itemize}
  \item \textbf{Design \(f_3 = \texttt{reward\_fn(logits)}\):}  
  \begin{itemize}[nosep,leftmargin=2em]
    \item Based on your analysis, propose a *new* function \(f_3\).
    \item \textbf{Aim for diversity:} Propose a mix of novel functions and variations on the provided examples.
  \end{itemize}
  \item \textbf{Implementation Requirements:}  
  \begin{itemize}[nosep,leftmargin=2em]
    \item \textbf{Input:} \texttt{logits} (a JAX array).
    \item \textbf{Output:} \texttt{reward} (a JAX array of the same shape).
    \item \textbf{Language:} JAX.
    \item \textbf{Function Name:} \texttt{reward\_fn}.
    \item \textbf{Clarity:} Ensure the code is clean, well-commented (if necessary), and easily extractable. Include necessary imports (\texttt{jax.numpy as jnp}, \texttt{jax.nn} etc.).
    \item \textbf{Enclose in Code Block:} Use a code block with the language \texttt{python}.
    \item \textbf{Jittable:} Ensure the function is jittable by JAX.
  \end{itemize}
\end{enumerate}

\textbf{Response Format:}

\begin{tcolorbox}[colback=black!5!white,colframe=black!40!white,sharp corners,boxrule=0.5pt,top=1pt,bottom=1pt,left=4pt,right=4pt]
  \scriptsize
\begin{verbatim}
import jax.numpy as jnp
# from jax import nn # Uncomment or add other imports if needed

def reward_fn(logits):
    """
    [Brief description of the function's logic/intent]
    """
    # [Your implementation here]
    reward = ...
    return reward
\end{verbatim}
\end{tcolorbox}

\textbf{Pair of Reward Functions:}  

Function 1:  
\begin{tcolorbox}[colback=black!5!white,colframe=black!40!white,sharp corners,boxrule=0.5pt,top=1pt,bottom=1pt,left=4pt,right=4pt]
  \scriptsize
\begin{verbatim}
[Code]
\end{verbatim}
\end{tcolorbox}
Score: [score]

Function 2:  
\begin{tcolorbox}[colback=black!5!white,colframe=black!40!white,sharp corners,boxrule=0.5pt,top=1pt,bottom=1pt,left=4pt,right=4pt]
  \scriptsize
\begin{verbatim}
[Code]
\end{verbatim}
\end{tcolorbox}
Score: [score]

\end{tcolorbox}

\subsection{Generated Example:}
Below is an example RA function generated by the LLM using the previously described prompt.
\begin{tcolorbox}[colback=black!5!white,colframe=black!70,sharp corners,title=Parent 1]
\scriptsize
\begin{verbatim}
import jax
import jax.numpy as jnp
import jax.nn

def reward_fn(logits):
    """
    Softplus function, but clipped at a maximum value (e.g., 5.0).
    Prevents potentially destabilizing large positive rewards.
    r = min(softplus(logits), CLIP_VALUE)
    """
    CLIP_VALUE = 5.0
    return jnp.minimum(jax.nn.softplus(logits), CLIP_VALUE)
\end{verbatim}
\textbf{Score: -10.0}
\end{tcolorbox}

\begin{tcolorbox}[colback=black!5!white,colframe=black!70,sharp corners,title=Parent 2]
\scriptsize
\begin{verbatim}
import jax.numpy as jnp

def reward_fn(logits):
    reward = -logits*jnp.exp(logits)
    return reward
\end{verbatim}

\textbf{Score: -15.0}
\end{tcolorbox}

\begin{tcolorbox}[colback=black!5!white,colframe=black!70,sharp corners,title=Generated Crossover]
\scriptsize
\begin{verbatim}
import jax.numpy as jnp
import jax.nn

def reward_fn(logits):
    """
    Combines a smooth non-linear shaping with symmetric bounds:
    - Uses tanh scaled by softplus to keep rewards smooth and bounded.
    - For negative logits, rewards are near zero but smoothly increase toward small 
    positive values, avoiding harsh penalties.
    - For positive logits, rewards grow but saturate due to tanh,
      mitigating exploding positive rewards that cause instability.
    Intuition:
    - This function gently encourages the policy towards regions where logits are positive,
      but heavily penalizing large values is avoided by the tanh saturation.
    - Provides smoother gradients near zero by multiplying softplus(logits) which behaves 
    like ReLU.
    """
    # Softplus ensures smooth soft-rectifier for logits, avoiding zero-gradient dead zones.
    sp = jax.nn.softplus(logits)  # ~0 for large negative, linear for large positive

    # tanh bounds output between -1 and 1, adding mild negative rewards for 
    # strongly negative logits
    # Scale tanh(logits) to control sharpness:
    scaled_tanh = jnp.tanh(logits / 2.0)

    # Combine signals multiplicatively:
    reward = sp * scaled_tanh

    return reward
\end{verbatim}
\end{tcolorbox}

\section{Hyperparameters}
\label{app:hyperparams}
The hyperparameters used across different components are summarized below. Table~\ref{tab:llm_evolution_params} contains the parameters for the LLM evolution process. The PPO hyperparameters used to train both the expert and the AIL policy are listed in Table~\ref{tab:ppo-hyperparams}, and the discriminator-specific settings are provided in Table~\ref{tab:disc-hyperparams}. The hyperparameters for the A2C algorithm are reported in Table~\ref{tab:a2c-hyperparams}.

\begin{table}[ht]
\centering
\caption{LLM Evolution hyperparameters}
\label{tab:llm_evolution_params}
\begin{tabular}{lc}
\toprule
\textbf{Parameter} & \textbf{Value} \\
\midrule
LLM Model & GPT-4.1-mini \\
Number of Generations & 10 \\
Number of Pairs & 20 \\
Candidates per Pair & 1 \\
Selection Size & 10 \\
Number of evaluation seeds & 16 \\
\bottomrule
\end{tabular}
\end{table}

\begin{table}[h]
\centering
\caption{PPO hyperparameters for MinAtar and Brax}
\label{tab:ppo-hyperparams}
\begin{tabular}{lcc}
\toprule
\textbf{Hyperparameter}       & \textbf{MinAtar} & \textbf{Brax} \\
\midrule
Number of Environments        & 64               & 2048 \\
Number of Env Steps           & 128              & 10 \\
Total Timesteps               & $1 \times 10^7$  & $5 \times 10^7$ \\
Number of Minibatches         & 8                & 32 \\
Number of Epochs              & 4                & 4 \\
Discount Factor               & 0.99             & 0.99 \\
GAE $\lambda$                 & 0.95             & 0.95 \\
PPO Clip                      & 0.2              & 0.2 \\
Value Function Coefficient    & 0.5              & 0.5 \\
Entropy Coefficient           & 0.01             & 0 \\
Max Gradient Norm             & 0.5              & 0.5 \\
Layer Width                   & 64               & 256 \\
Number of Hidden Layers       & 2                & 2 \\
Activation                    & relu             & relu \\
LR                            & 0.005            & 0.0003 \\
Anneal LR                     & linear           & none \\
Optimizer                     & adam             & adam \\
\bottomrule
\end{tabular}
\end{table}

\begin{table}[h]
\centering
\caption{Discriminator hyperparameters for Brax and Minatar.}
\label{tab:disc-hyperparams}
\begin{tabular}{lcc}
\toprule
\textbf{Hyperparameter}       & \textbf{MinAtar} & \textbf{Brax} \\
\midrule
Layer Width                   & 64               & 128 \\
Number of Hidden Layers       & 1                & 1 \\
Activation                    & relu             & relu \\
Learning Rate (LR)            & 0.0003           & 0.0003 \\
Gradient Penalty Weight       & 0.1              & 1.0 \\
Number of Epochs              & 1                & 1 \\
Number of Minibatches         & 8                & 32 \\
Activation                    & relu             & relu \\
Optimizer                     & adam             & adam \\
\bottomrule
\end{tabular}
\end{table}

\begin{table}[h]
\centering
\caption{A2C hyperparameters for MinAtar SpaceInvaders}
\label{tab:a2c-hyperparams}
\begin{tabular}{lc}
\toprule
\textbf{Hyperparameter}       & \textbf{Value} \\
\midrule
Number of Environments        & 64 \\
Number of Env Steps           & 16 \\
Total Timesteps               & $1 \times 10^7$ \\
Number of Minibatches         & 8 \\
Discount Factor               & 0.99 \\
GAE $\lambda$                 & 0.95 \\
Value Function Coefficient    & 5.0 \\
Entropy Coefficient           & 0.01 \\
Max Gradient Norm             & 10.0 \\
Layer Width                   & 64 \\
Number of Hidden Layers       & 2 \\
Activation                    & relu \\
LR                            & 0.005 \\
Anneal LR                     & linear \\
Optimizer                     & adam \\
\bottomrule
\end{tabular}
\end{table}

\section{Individual Training Curves}
\begin{figure}
    \centering
    \includegraphics[width=\linewidth]{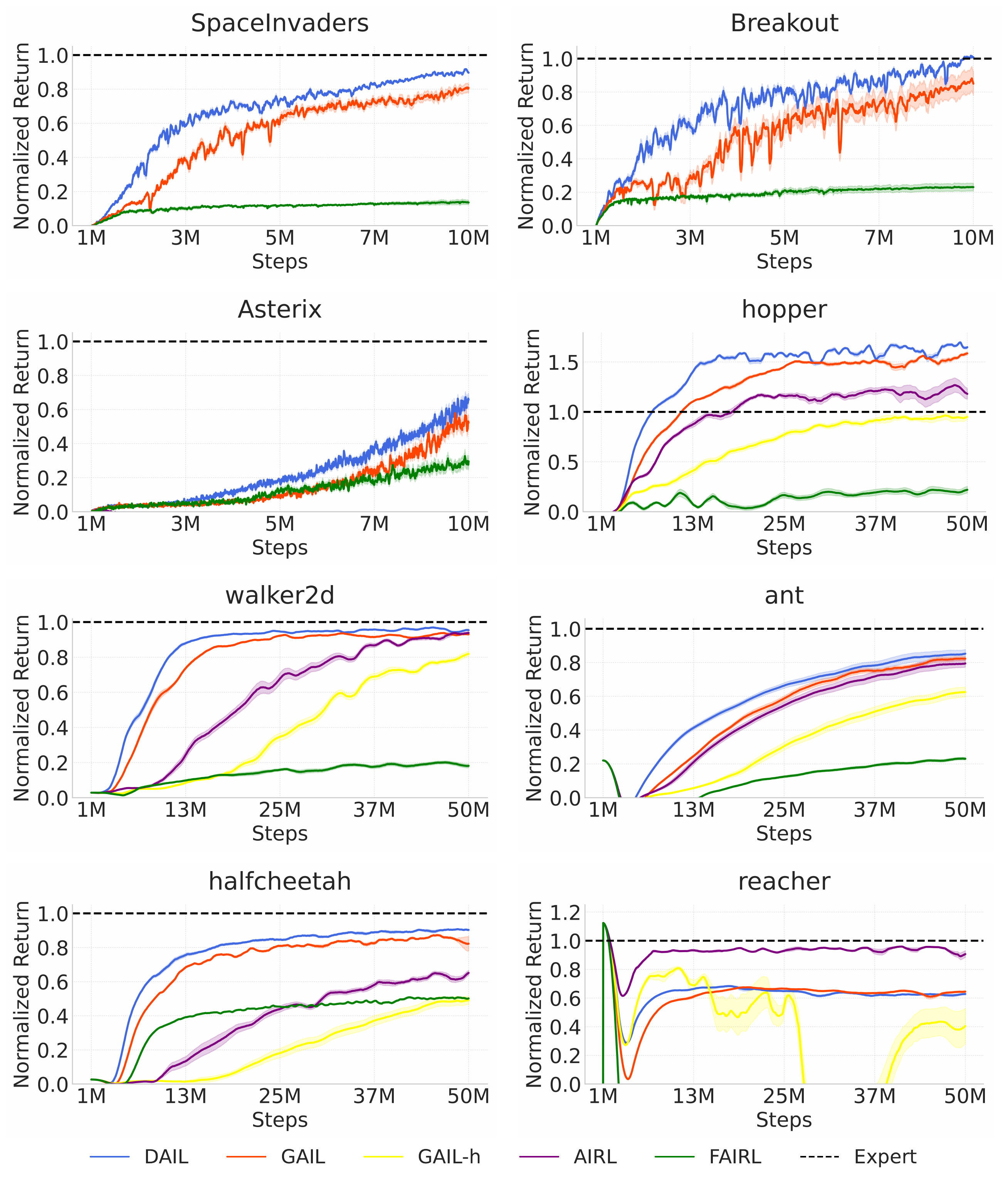}
    \caption{{Mean normalized returns across all evaluated environments.} DAIL consistently outperforms baseline methods, with the exception of AIRL on \textit{Reacher}.}
    \label{fig:training_curves}
\end{figure}
Figure~\ref{fig:training_curves} presents the training curves for DAIL, and baseline methods. DAIL consistently outperforms the baselines across all evaluated environments, with the exception of AIRL outperforming DAIL on \textit{Reacher}. Additionally, DAIL exhibits faster convergence even when final returns are comparable, underscoring the training stability introduced by its meta-learned reward assignment function.

\section{Discovered Reward Assignment Functions}
The top five reward assignment functions discovered are shown in Table~\ref{tab:top5-activations-minatar}. While the resulting functions are complex, they are partly composed of primitives found in the base population, such as $x$, $\log x$ and $e^x$ along with new ones such as $|x|, \min(x,y)$ and $\max(x,y)$.

\begin{table}[h]
\centering
\caption{Top 5 reward assignment functions discovered generated after evolution. Each function is expressed in terms of discriminator logits $l$.}
\label{tab:top5-activations-minatar}
\renewcommand{\arraystretch}{1.4}
\colorbox{gray!10}{\begin{tabular}{ll}
\toprule
\textbf{Environment} & \textbf{Reward Assignment Functions} \\
\midrule
SpaceInvaders-Run $1$ &
\begin{tabular}[t]{@{}l@{}}
1. $\displaystyle \sigma(l) \cdot 0.5 \cdot \big(\tanh(l) + 1 \big)$ \\
2. $\displaystyle
\min\Bigg(
1.5, \max\Bigg(
0, \,
\begin{cases}
0.5 + 0.8\, l - \frac{\mathrm{softplus}\big(1.5(-l - 0.8)\big)}{1.5}, & l \leq -0.8 \\[8pt]
0.5 + 0.8\, l, & -0.8 < l < 0.8 \\[8pt]
0.5 + 0.8 \cdot 0.8 + \frac{\mathrm{softplus}\big(1.5(l - 0.8)\big)}{1.5}, & l \geq 0.8
\end{cases}
\Bigg)
\Bigg)
$ \\
3. $\displaystyle \text{softplus}(l) \cdot \sigma(1.5 l) + 0.5 \cdot \mathrm{gelu}(l)$ \\
4. $\displaystyle \frac{l}{1 + |l|} \cdot \sigma(3 l) \cdot 0.5 \cdot \big(\tanh(l) + 1\big)$ \\
5. $\displaystyle 0.5 \cdot \bigg(\frac{l}{1 + |l|} + 1 \bigg) \cdot \sigma(3 l)$ \\
\end{tabular} \\
\addlinespace
SpaceInvaders-Run~2 &
\begin{tabular}[t]{@{}l@{}}
1. $\displaystyle 
\mathrm{clip}\!\left(
0.5 \cdot \big(\tanh(l) + 1\big) \, \cdot \mathrm{softplus}(l)
- 0.1 \, \mathrm{softplus}(-l), \, 0, \, \infty
\right)
$ \\[8pt]
2. $\displaystyle 
\mathrm{softplus}(l) \cdot \, 
\Big(1 - \tanh^2\!\Big(\frac{l - 3}{1.5}\Big)\Big)
$ \\[8pt]
3. $\displaystyle 
\sigma(3l) \cdot \, \mathrm{softplus}(l)
$ \\[8pt]
4. $\displaystyle 
\sigma\!\big(3(l - 1)\big) \,
\cdot \sigma\!\big(5(2.5 - |l|)\big)
$ \\[8pt]
5. $\displaystyle 
\mathrm{clip}\Bigg(\big(\frac{l}{1 + |l|} + 0.2\!\left(\frac{l}{1 + |l|}\right)^{\!3} + 0.3\big)
\cdot \exp\!\big(-0.2 \, \max(l, 0)\big),0,\infty \Bigg)
$ \\
\end{tabular} \\
\addlinespace
Breakout &
\begin{tabular}[t]{@{}l@{}}

1. $\displaystyle 0.5 \cdot \big(\tanh(1.5 \cdot l) + 1\big)$ \\

2. $\displaystyle \mathrm{softplus}(l) \cdot \frac{l \cdot \sigma(l) + 1}{2}$ \\

3. $\displaystyle \big(\tanh(l) + 1\big) \cdot \sigma(l)$ \\

4. $\displaystyle \frac{\tanh(2.0 \cdot l)}{2.0} + 0.3 \cdot \mathrm{softplus}(l) + 0.5$ \\

5. $\displaystyle (1 - \sigma(2.0 \cdot l)) \cdot 0.7 \cdot \mathrm{softplus}(l) + \sigma(2.0 \cdot l) \cdot \big(\mathrm{softplus}(l) + \mathrm{clip}(0.3 \cdot l, -1, 2)\big)$ \\

\end{tabular} \\

\bottomrule
\end{tabular}}
\end{table}

\section{Runtime and Compute Used}
Our experiments were conducted using a mix of GPUs available on our compute cluster, including NVIDIA L40, A100, 3090, and H100 NVL. Each AIL evaluation involved training $16$ agents in parallel, distributed equally across two GPUs. The wall-clock time for each evaluation is reported in Table~\ref{tab:compute}. 

\begin{table}[ht]
\centering
\caption{All reported runtimes correspond to wall-clock time measured while training 16 agents in parallel on 2 H100 NVL GPUs.}
\label{tab:compute}
\begin{tabular}{@{}lc@{}}
\toprule
Environment  & Wallclock time (s) \\
\midrule
Ant                 & 346.53 \\
HalfCheetah         & 1034.86 \\
Hopper              & 585.97 \\
Walker2d            & 701.22 \\
Reacher             & 290.54 \\
Breakout            & 55.07 \\
Asterix             & 97.34 \\
SpaceInvaders       & 52.57 \\
\bottomrule
\end{tabular}
\end{table}

\begin{table}[h!]
\centering
\caption{\textcolor{black}{Comparison between DAIL and OPEN-ES across MinAtar and Brax environments. Reported values denote mean $\pm$ standard error over evaluation runs. $^{\dagger}$Note that DAIL and OPEN-ES were meta-trained on MinAtar SpaceInvaders.}}
\small
\colorbox{gray!10}{%
\begin{tabular}{lcc}
\toprule
\textbf{Environment} & \textbf{DAIL} & \textbf{OPEN-ES} \\
\midrule
SpaceInvaders$^{\dagger}$ & \textbf{0.90 {\tiny $\pm$ 0.00}} & 0.73 {\tiny $\pm$ 0.05} \\
Asterix           & 0.66 {\tiny $\pm$ 0.03} & \textbf{1.27 {\tiny $\pm$ 0.04}} \\
Breakout          & \textbf{1.01 {\tiny $\pm$ 0.00}} & 0.38 {\tiny $\pm$ 0.10} \\
HalfCheetah       & \textbf{0.90 {\tiny $\pm$ 0.00}} & 0.62 {\tiny $\pm$ 0.02} \\
Walker2d          & \textbf{0.95 {\tiny $\pm$ 0.00}} & 0.73 {\tiny $\pm$ 0.02} \\
Hopper      & \textbf{1.65 {\tiny $\pm$ 0.01}} & 1.09 {\tiny $\pm$ 0.03} \\
Reacher     & 0.63 {\tiny $\pm$ 0.01} & \textbf{0.83 {\tiny $\pm$ 0.01}} \\
Ant    & \textbf{0.85 {\tiny $\pm$ 0.02}} & 0.51 {\tiny $\pm$ 0.03} \\
\bottomrule
\end{tabular}%
}

\label{tab:dail_vs_openes}
\end{table}

\section{\textcolor{black}{Comparison with Open-ES}}
\begin{wrapfigure}{r}{0.5\textwidth}
    \centering
    \includegraphics[width=\linewidth]{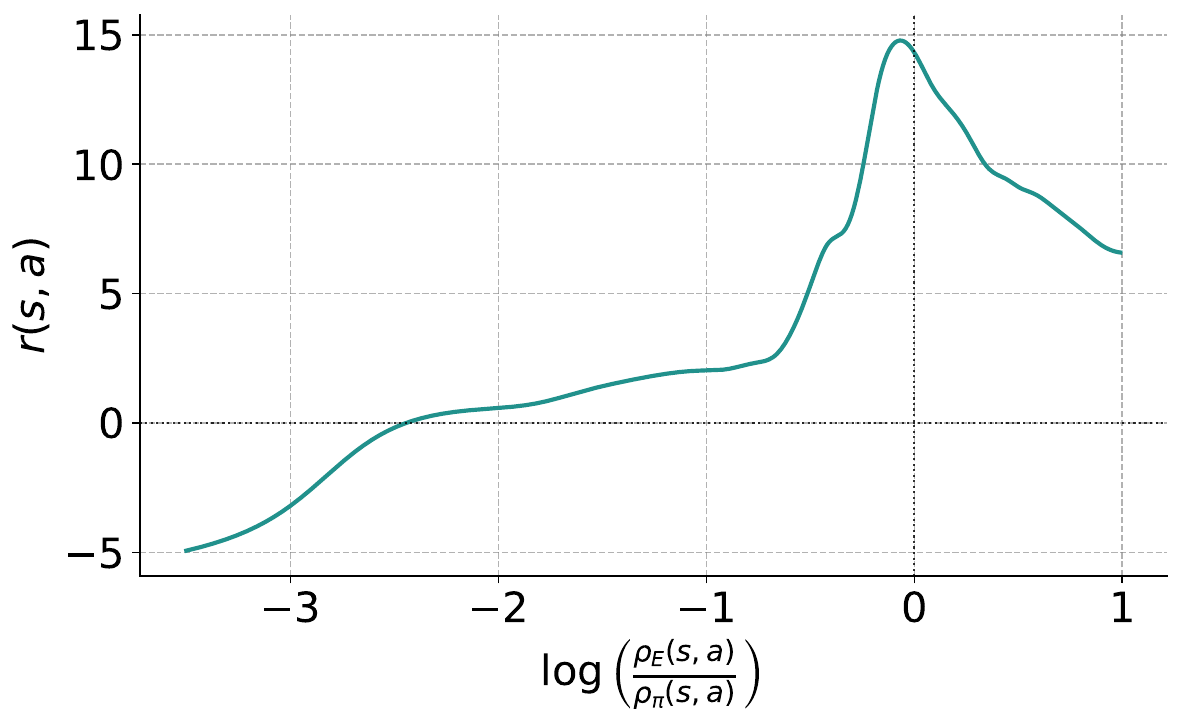}
    \caption{\textcolor{black}{RA function generated by OpenES}}
    \label{fig:open_es}
\end{wrapfigure}
\textcolor{black}{We evaluate the contribution of the LLM in guiding the evolutionary search by comparing it to a widely used baseline for optimizing the outer-loop objective (Eq.\ref{eq:main_obj}): OpenAI Evolution Strategies (ES) \citet{openes}. OpenAI-ES is a black-box, gradient-free optimization algorithm known for its strong performance on similar classes of problems~\cite{open,evil}. For a fair comparison, both methods are given the same computational budget ($200$ inner-loop evaluations). From Table~\ref{tab:dail_vs_openes}, we observe that OPEN-ES underperforms DAIL both on the training environment (SpaceInvaders) and on the test environments overall, consistent with the findings reported by \citet{goldiedisc}. Figure~\ref{fig:open_es} visualizes the discovered RA function, which appears irregular and non-smooth, likely contributing to its inferior performance.}

\section{Comparison with additional divergence based methods}
\begin{table}[h!]
\centering
\caption{\textcolor{black}{Comparison with additional divergence-based AIL methods across MinAtar and Brax environments. Reported values denote mean $\pm$ standard error.}}
\small
\colorbox{gray!10}{%
\begin{tabular}{lccccc}
\toprule
\textbf{Environment} & \textbf{DAIL} & \textbf{Pearson} & \textbf{Sq-Hellinger} & \textbf{TV} & \textbf{WGAIL} \\
\midrule
Asterix   & \textbf{0.66 {\tiny $\pm$ 0.03}} & -0.03 {\tiny $\pm$ 0.00} & -0.02 {\tiny $\pm$ 0.00} & -0.01 {\tiny $\pm$ 0.00} & 0.52 {\tiny $\pm$ 0.04} \\
Breakout  & \textbf{1.01 {\tiny $\pm$ 0.00}} & -0.01 {\tiny $\pm$ 0.00} & -0.01 {\tiny $\pm$ 0.00} & -0.01 {\tiny $\pm$ 0.00} & 0.91 {\tiny $\pm$ 0.06} \\
Ant               & \textbf{0.85 {\tiny $\pm$ 0.02}} &  0.57 {\tiny $\pm$ 0.02} &  0.82 {\tiny $\pm$ 0.03} &  0.19 {\tiny $\pm$ 0.05} & 0.80 {\tiny $\pm$ 0.02} \\
HalfCheetah       & \textbf{0.90 {\tiny $\pm$ 0.00}} &  0.49 {\tiny $\pm$ 0.04} &  0.86 {\tiny $\pm$ 0.01} & -0.01 {\tiny $\pm$ 0.01} & 0.87 {\tiny $\pm$ 0.01} \\
Hopper            & \textbf{1.65 {\tiny $\pm$ 0.01}} &  0.90 {\tiny $\pm$ 0.11} &  1.49 {\tiny $\pm$ 0.04} &  1.59 {\tiny $\pm$ 0.04} & 1.50 {\tiny $\pm$ 0.01} \\
Reacher           & 0.63 {\tiny $\pm$ 0.01} &  0.88 {\tiny $\pm$ 0.05} &  \textbf{0.94 {\tiny $\pm$ 0.01}} &  0.86 {\tiny $\pm$ 0.07} & 0.42 {\tiny $\pm$ 0.01} \\
Walker2d          & 0.95 {\tiny $\pm$ 0.00} &  0.54 {\tiny $\pm$ 0.05} &  \textbf{0.98 {\tiny $\pm$ 0.00}} &  0.91 {\tiny $\pm$ 0.01} & 0.89 {\tiny $\pm$ 0.01} \\
\bottomrule
\end{tabular}%
}
\label{tab:dail_vs_divergences}
\end{table}

\textcolor{black}{We evaluate RA functions derived from alternative $f$-divergences, as summarized in Table~\ref{tab:fdiv_reward_table}, which has not been explored in prior work~\cite{fairl, whatmattersforail}. We also compare DAIL to Wasserstein-GAIL (WGAIL) \cite{}, which corresponds to replacing the discriminator objective (\ref{eq:ce_loss}) with:
\begin{align}
    D^*(s,a) = \arg\max_{D \in \{ f : \| f \|_{L} \le 1 \}} \; \mathbb{E}_{(s,a)\sim\rho_E} \left[  D(s,a) \right] - \mathbb{E}_{(s,a)\sim\rho} \left[ D(s,a) \right]
\end{align}
Note that we already employ gradient penalty as a regularizer, enforcing the discriminator to be $1$-Lipschitz. After a hyperparameter sweep over the discriminator learning rate, we find that $3 \times 10^{-3}$ performs best for WGAIL. From Table \ref{tab:dail_vs_divergences}, we observe that DAIL outperforms the divergence-based AIL baselines on 5 out of 7 test environments.
} 

\section{Comparison with non-adversarial methods}
We additionally compare DAIL with non-adversarial methods, namely Behavior Cloning (BC; ~\citet{bc}) and IQ-Learn \citep{iqlearn}, a state-of-the-art non-adversarial algorithm. For IQ-Learn, we use the official implementation provided by \citet{iqlearn}. Results are reported in Table~\ref{tab:iql_comp}. DAIL consistently outperforms both BC and IQ-Learn. Interestingly, IQ-Learn fails to surpass BC in 3 of the 4 tested environments, a phenomenon also observed in prior work \citep{diffail2, sfm}, highlighting stability challenges inherent in non-adversarial imitation learning methods as well.

\begin{table}[h]
\centering
\caption{Comparison with \textbf{non-adversarial} imitation learning algorithms. Reported values are mean returns ± 95\% confidence intervals across 4 random seeds for each environment.}
\label{tab:iql_comp}
\colorbox{gray!10}{\begin{tabular}{lcccc}
\toprule
\textbf{Method} & \textbf{Ant} & \textbf{HalfCheetah} & \textbf{Hopper} & \textbf{Walker2d} \\
\midrule
BC & $0.23$ {\tiny $\pm$ 0.04} & $0.09$ {\tiny $\pm$ 0.02} & $0.23$ {\tiny $\pm$ 0.09} & $0.04$ {\tiny $\pm$ 0.01} \\
IQ-learn & $-0.32$ {\tiny $\pm$ 0.01} & $-0.02$ {\tiny $\pm$ 0.00} & $0.02$ {\tiny $\pm$ 0.06} & $0.03$ {\tiny $\pm$ 0.01} \\
DAIL & $0.88$ {\tiny $\pm$ 0.06} & $0.90$ {\tiny $\pm$ 0.01} & $1.72$ {\tiny $\pm$ 0.04} & $0.97$ {\tiny $\pm$ 0.01} \\
\bottomrule
\end{tabular}}
\end{table}

\section{LLM Usage}
LLMs have been used in the writing of this paper, primarily to refine the quality of the text through prompt-based polishing of author-written drafts.


\end{document}